\newcommand{\assign}{\leftarrow}
\newcommand{\citep}[1]{\cite{#1}}
\newcommand{\citet}[1]{\cite{#1}}
\newcommand{\yrcite}[1]{\cite{#1}}
\newcommand{\mode}{\operatorname{mode}}
\newcommand{\spc}{\hspace{1cm}}
\newcommand{\Bin}{\operatorname{Bin}}
\newcommand{\Ber}{\operatorname{Ber}}
\newcommand{\OC}[1]{$\operatorname{OC}({#1})$}
\newcommand{\DC}[1]{$\operatorname{DC}_{#1}$}
\newcommand{\supp}{\operatorname{supp}}
\newtheorem{theorem}{Theorem}
\newtheorem{corollary}[theorem]{Corollary}
\newtheorem{lemma}[theorem]{Lemma}
\newtheorem{definition}{Definition}
\newtheorem{condition}{Condition}
\newcommand{\SUBALGNAME}{{SCIP}}
\newcommand{\subalgname}{\operatorname{SCIP}}
\newcommand{\prob}{\mathbb{P}}
\newcommand{\BG}{\operatorname{\mathsf{BG}}}
\newcommand{\cE}{\mathcal{E}}
\newcommand{\cI}{\mathcal{I}}
\newcommand{\cN}{\mathcal{N}}
\newcommand{\cR}{\mathcal{R}}
\newcommand{\cS}{\mathcal{S}}
\newcommand{\cW}{\mathcal{W}}
\newcommand{\essenS}{\underline{I}}
\newcommand{\sgn}{\operatorname{sign}}
\newcommand{\ba}{\mathbf{a}}
\newcommand{\bA}{\mathbf{A}}
\newcommand{\bD}{\mathbf{D}}
\newcommand{\bI}{\mathbf{I}}
\newcommand{\bM}{\mathbf{M}}
\newcommand{\bN}{\mathbf{N}}
\newcommand{\bS}{\mathbf{S}}
\newcommand{\bX}{\mathbf{X}}
\newcommand{\bbE}{\mathbb{E}}
\newcommand{\bbZ}{\mathbb{Z}}
\newcommand{\bSigma}{\boldsymbol{\Sigma}}
\newcommand{\ol}[1]{\overline{#1}}
\newcommand{\R}{{\rm I}\kern-0.18em{\rm R}}
\newcommand{\h}{{\rm I}\kern-0.18em{\rm H}}
\newcommand{\PP}{{\rm I}\kern-0.18em{\rm P}}
\newcommand{\E}{{\rm I}\kern-0.18em{\rm E}}
\newcommand{\Z}{\mathbb Z}
\newcommand{\1}{{\rm 1}\kern-0.24em{\rm I}}
\newcommand{\N}{{\rm I}\kern-0.18em{\rm N}}
\DeclareMathOperator*{\argmax}{argmax}
\begin{document}
\title{\bf Sparse Gaussian ICA}
\author{
  Nilin Abrahamsen\qquad
  Philippe Rigollet
  \vspace{0.15in}\\Department of Mathematics, Massachusetts Institute of Technology}
\maketitle

\begin{abstract}Independent component analysis (ICA) is a cornerstone of modern data analysis. Its goal is to recover a latent random vector $S$ with independent components from samples of $X=\bA S$ where $\bA$ is an unknown mixing matrix. Critically, all existing methods for ICA rely on and exploit strongly the assumption that $S$ is not Gaussian as otherwise $\bA$ becomes unidentifiable. In this paper, we show that in fact one can handle the case of Gaussian components by imposing structure on the matrix $\bA$. Specifically, we assume that $\bA$ is sparse and generic in the sense that it is generated from a sparse Bernoulli-Gaussian ensemble. Under this condition, we give an efficient algorithm to recover the columns of $\bA$ given only the covariance matrix of $X$ as input even when $S$ has several Gaussian components.\end{abstract}
  
\section{Introduction}

Independent component analysis (ICA) is a statistical model which has become ubiquitous in a variety of applications including image processing~\cite{BelSej97,ZhaSunLiu07}, neuroscience~\cite{JunMakMck01}, and genomics~\cite{Sur03,KonVanGun08,EngDaiMar10}. The ICA model expresses an observed random vector $X\in\R^r$ as a linear transformation
\[X=\bA S\label{sparseICA}\]
of a latent random vector $S\in\R^s$ with independent components, called \emph{sources}. Here, $\bA\in\R^{r\times s}$ is an unknown deterministic \emph{mixing matrix}~\cite{HyvKarOja01}. Arguably the most studied problem in the ICA model is \emph{blind source separation} where the goal is to recover both the mixing matrix and the sources from observations of $X$. Another problem, called \emph{feature extraction}, is that of recovering just the mixing matrix~\cite{HyvOja98}. In this paper we focus on the feature extraction problem. Unlike blind source separation, feature extraction may be solved even in the \emph{overcomplete} setting where $s>r$. Nevertheless, for $\bA$ to be identifiable, additional assumptions need to be imposed beyond independence of the latent components. Indeed, if $S\sim\cN(0,\bI_s)$, then $\bSigma=\bbE XX^\top=\bA\bA^\top$ is a sufficient statistic for the distribution of $X$, and this is unchanged if $\bA$ is replaced by $\bA U$ for any orthogonal matrix $U\in\R^{s\times s}$. Thus, $\bA$ is at best identifiable up to right multiplication by an orthogonal matrix. It turns out that the above example is essentially the only case when $\bA$ is not identifiable. More precisely, a classical result states that if at most one component of $S$ is Gaussian, then $\bA$ can be recovered up to a permutation and rescaling of its columns~\cite{Com94}.

\paragraph{Previous work.}
In view of the identifiability issues arising in the Gaussian case, practical algorithms for ICA have traditionally relied on the fourth cumulants of $X$ to exploit non-Gaussianity~\cite{AlbFerChe05}. 
Perhaps the most widely known method in this line of work is the FastICA algorithm by Hyv\"arinen and Oja~\yrcite{HyvOja97}, which iteratively finds the one-dimensional projections maximizing kurtosis of the data. However, all such methods fail when $S$ has fourth moments close to those of a Gaussian. Moreover, traditional methods for ICA almost universally use of an initial \emph{whitening} step, which transforms $X$ to have covariance matrix $\bI_r$. This step is fragile to independent additive noise on $X$. Voss et al.~\yrcite{BelRadVos13} introduce algorithms to overcome this problem, and Arora et al.~\yrcite{AroGeMoi15} use a \emph{quasi-whitening} step which allows them to prove guarantees for ICA in the presence of additive Gaussian noise. Nevertheless, all these methods exploit non-gaussianity of the sources.

\paragraph{Our contribution.}
In this paper we take a radically different approach that removes distributional assumptions on $S$. In fact we do not even require $S$ to have independent components, but only that the components be uncorrelated. In addition our methods are robust to additive independent noise on $X$ with any centered distribution. We achieve this by instead making structural assumptions on~$\bA$. Specifically, we assume that $\bA$ is \emph{sparse} and \emph{generic}. Sparsity has been a key idea to reduce dimensionality in signal processing and statistics~\cite{CanTao05,FouRau13,HasTibWai15,BuhVDGee11}.
Beyond reducing complexity and avoiding overfitting, sparsity of the mixing matrix is of great practical interest because it leads to \emph{interpretable} results. Hyv\"arinen and Raju \yrcite{HyvRaj02} have previously proposed to impose sparsity conditions on the mixing matrix in order to improve the performance of ICA. This work presents the first rigorous treatment of an algorithm taking advantage of a sparse mixing matrix. 

In addition to being sparse we require that $\bA$ be generic, which we enforce by choosing $\bA$ as the realization of a Bernoulli-Gaussian ensemble. Similar structural assumptions have recently been employed in dictionary learning for example~\cite{SpiWanWri12, AroGeMoi13, AgaAnaJai14}. While the two problems are related, fundamental differences preclude the use of standard dictionary learning machinery (See section~\ref{model}).
\paragraph{Notation.}
We use the shorthand $[r]=\{1,\ldots,r\}$. We write the $i$th entry of vector $v$ as $v(i)$. For $v\in\R^r$ and indices $I=\{i_1,\ldots,i_k\}\subset[r]$, we write the restriction of $v$ to $I$ as $v(I)=(v(i_1),\ldots,v(i_k))^\top$. $e_i\in\R^r$ is the $i$th standard unit vector, $e_i(i)=1$ and $e_i(j)=0$ for $j\neq i$. Similarly for a matrix $M$, $M(I\times J)$ is the submatrix $(M_{ij})_{i\in I,j\in J}$. $M(\cdot\times J)$ is the submatrix which keeps all rows but only columns indexed by $j\in J$. We say that a matrix is \emph{fully dense} if all its entries are nonzero. For tuples $v$ and $w$, $v/w$ is the entrywise ratio, $(v/w)(i)=v(i)/w(i)$. $diag(v)$ is the diagonal matrix with $v$ along the diagonal.
For a set $\cS$, $|\cS|$ is its cardinality. We define the \emph{support} of a vector $\supp v=\{i|v(i)\neq0\}$ and write $|v|_0=|\supp v|$. $|v|_p=(\sum_i|v(i)|^p)^{1/p}$ denotes the $p$-norm, and $|M|_\infty=\sup_{ij}|M_{ij}|$ is the entrywise supremum norm.
$\cW(\bSigma,n)$ denotes the Wishart distribution with scale matrix $\bSigma\in\R^{r\times r}$ and $n$ degrees of freedom, i.e., for i.i.d. samples $X_1,\ldots,X_n\sim \cN(0,\bSigma)$ of a Gaussian vector, $\frac1n\sum_{i=1}^nX_iX_i^\top\sim\cW(\bSigma,n)$. $\Bin(n,\theta)$ is the binomial distribution with $n$ trials and success probability $\theta$, and $\Ber(\theta)=\Bin(1,\theta)$ is the distribution of a Bernoulli trial with expectation $\theta$. We write the identity matrix in $\R^{r\times r}$ as $\bI_r$ and the all-ones vector as $\1=\1_r$. We use the notation $x\vee y=\max\{x,y\}$ and $x\wedge y=\min\{x,y\}$.

\section{Statistical Model}\label{model}
Let $S\in\R^s$ be a random vector of sources with independent components. $S$ is transformed by multiplication with the unknown mixing matrix $\bA\in\R^{r\times s}$ to be estimated. We also allow independent additive noise $N\sim\cN(0,\bD_\sigma)$ on the transformed vector, where $\bD_\sigma=diag(\sigma_1^2,\ldots,\sigma_r^2)$ is some diagonal nonnegative matrix (possibly zero). We arrange $n$ i.i.d. copies of $S$ in a matrix $\bS\in\R^{s\times n}$ whose entries are i.i.d. with distribution $\cN(0,1)$.\footnote{The Gaussianity assumption can be relaxed.} Similarly, concatenate $n$ copies of $N$ to construct $\bN\in\R^{r\times n}$ independent of $\bS$ and with independent entries $\bN_{im}\sim\cN(0,\sigma_i^2)$. The observed data is then $\bX\in\R^{r\times n}$ given by
\begin{equation}
\bX=\bA\bS+\bN.\label{finsample}
\end{equation}
The columns of $\bX$ are i.i.d.~with distribution $\cN(0,\bSigma)$, where
\[\bSigma=\bA\bA^\top+\bD_\sigma.\]
Write the sample covariance matrix as $\overline\bSigma=\frac1n\bX\bX^\top$. Then $\ol\bSigma\sim \cW(\bSigma,n)$ follows a Wishart distribution with scale matrix $\bSigma$ and $n$ degrees of freedom. Our goal is to learn $\bA$ up to permutations and sign changes of its columns, i.e., to recover $\bA\Pi \Delta$ where $\Pi$ is some permutation matrix, and $\Delta$ is a diagonal matrix with diagonal entries in $\{-1,1\}$.

\paragraph{Relation to dictionary learning.}
Dictionary learning, also known as sparse coding \citep{MaiBacPon10}, is a matrix factorization problem which is formally equivalent with~\eqref{finsample} without the noise term. With our notation the problem can be stated as follows. An unknown matrix $\bS^\top\in \R^{n\times s}$, called the \emph{dictionary}, is assumed to have various properties for identifiability purposes. These include \emph{incoherence}~\citep{AroGeMoi13,AgaAnaJai14}, or invertibility~\citep{SpiWanWri12}. The columns of $\bS^\top$ are called the \emph{atoms}. A sequence of $r$ vectors is observed, each of which is a sparse linear combination of atoms. Appending the observed vectors yields a matrix
\[\bX^\top=\bS^\top \bA^\top\,,\]
where $\bA^\top\in\R^{s\times r}$ is sparse and generic. The task is to recover $\bA^\top$ and the dictionary $\bS^\top$.
While this problem is formally equivalent with \eqref{finsample}, dictionary learning traditionally treats the regime $r>s$, i.e., the number of samples is larger than the number of dictionary elements~\citep{SpiWanWri12, LuhVu16}. This assumption is overly restrictive for our purposes as we allow the number of features $s$ to exceed the number of observed variables $r$, so we cannot employ existing results on dictionary learning. More specifically, while our results also cover the case $r>s$, we are primarily interested in the regime where $r\leq s$. 

In order to ensure that the mixing matrix is generic we generate $\bA$ by the following random model.
\begin{definition}\label{BG}
Let $\big(B_{ij},\xi_{ij}\big)_{\substack{i=1,\ldots,r\\j=1,\ldots,s}}$ be mutually independent random variables where
\[B_{ij}\sim \Ber(\theta)\,\text{, and }\xi_{ij}\sim \cN(0,1)\,\text{.}\]
If $A_{ij}=B_{ij}\xi_{ij}$, we say that matrix $\bA=(A_{ij})_{ij}$ arises from a Bernoulli-Gaussian ensemble and write $\bA\sim \BG(r,s,\theta)$.
\end{definition}
The mixing matrix $\bA\sim \BG(r,s,\theta)$ has, in expectation $rs\theta$ entries and we refer to $\theta$ has the \emph{sparsity parameter}. It represents the fraction of nonzero-entries. Let $\ba_1,\ldots, \ba_s$ and $\rho_1,\ldots, \rho_r$ denote the columns and rows of $\bA$ respectively,
\[\bA=(\ba_1\ldots \ba_s)=(\rho_1\ldots\rho_r)^\top\text{.}\]

Because of ambiguities from permutations and sign changes of $\ba_1,\ldots,\ba_s$ we evaluate the performance of the recovery algorithm in terms of the following distance measure.
\begin{definition}
Given $\bA,\hat\bA\in \R^{r\times s}$, write
\[d(\hat\bA,\bA)=\min_{\Pi,\Delta}|\hat\bA\Pi \Delta-\bA|_\infty,\]
where $\Pi\in \R^{s\times s}$ ranges over all permutation matrices, and $\Delta\in\R^{s\times s}$ ranges over diagonal matrices with diagonal entries taking values in $\{-1,1\}$.
\end{definition}

\newpage
\section{Main result}
We prove that when $\bA\sim\BG(r,s,\theta)$ and the sparsity parameter is of order $\theta\ll s^{-1/2}$, $\bA$ can be efficiently recovered from the covariance matrix $\bSigma$. In this setting, at most a small constant fraction of the entries of $\bSigma$ are nonzero. Moreover, we show that when $n$ is of order $s^2$, the sample covariance matrix $\overline\bSigma$ suffices to approximately recover $\bA$.
This is our main theorem.
\newcommand\mnthm{
  There exist $c,C>0$ such that the following holds. Let $r,s,\theta$ be such that
    \[C\frac{\log(r/\delta)}{r}\leq\theta\leq\frac{c}{\sqrt{s}+\log(r/\delta)},\]
    Let $\bSigma=\bA\bA^\top+\bD_\sigma$ where $\bA\sim\BG(r,s,\theta)$ and $\bD_\sigma$ is diagonal. Put
    \[n=C|\bSigma|_\infty^2(s^4\theta^6+(\log\tfrac r\delta)^2)\log(r/\delta).\]
    Then there is a randomized algorithm outputting $\hat\bA$ on input $\overline\bSigma\sim \cW(\bSigma,n)$ in expected time $O(r\theta rs)$ such that with probability $1-\delta$ over the randomness of $\bA$ and $\ol\bSigma$, $d(\hat\bA,\bA)=O(|\bA|_\infty^2|\bSigma|_\infty\sqrt{\log(r/\delta)/n})$.
  }
\begin{theorem}\label{thm:sampletheorem}\mnthm\end{theorem}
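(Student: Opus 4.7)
The argument splits naturally into three pieces: (i) a deterministic, population-level recovery of $\bA$ from the exact $\bSigma$, using structural properties of $\bA\sim\BG(r,s,\theta)$; (ii) a robustness analysis showing the recovery is stable under an entrywise perturbation of $\bSigma$; and (iii) a Wishart concentration bound showing $|\overline\bSigma-\bSigma|_\infty\lesssim|\bSigma|_\infty\sqrt{\log(r/\delta)/n}$, which fixes the sample size.

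For step (i), the key observation is that with $K_i:=\supp\rho_i$ and $I_k:=\supp\ba_k$, each off-diagonal entry $\bSigma_{ij}=\sum_{k\in K_i\cap K_j}A_{ik}A_{jk}$ is, with high probability, a single-term sum. Indeed, the upper bound $\theta\leq c/(\sqrt s+\log(r/\delta))$ gives $\bbE|K_i\cap K_j|=s\theta^2\ll1$, and a union bound over row pairs shows that with probability $1-\delta/2$ no pair has $|K_i\cap K_j|\geq 2$. In this event the graph $G$ on $[r]$ with edges $\{(i,j):\bSigma_{ij}\neq 0\}$ decomposes uniquely as a union of edge-disjoint cliques, one per column, whose vertex sets are exactly the $I_k$. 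The complementary lower bound $\theta\geq C\log(r/\delta)/r$ yields $|I_k|\asymp r\theta$ for every $k$ by Chernoff and a union over the $s$ columns, so every column's clique is substantial and identifiable. The subroutine $\subalgname$ then extracts each $I_k$ from $G$ and reconstructs $\ba_k$ up to a global sign using simple ratio/product identities such as $A_{ik}^2=\bSigma_{ij}\bSigma_{il}/\bSigma_{jl}$ and $A_{jk}/A_{lk}=\bSigma_{ij}/\bSigma_{il}$ for $i,j,l\in I_k$.

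For steps (ii) and (iii), sub-exponential concentration of the Wishart entries $(\overline\bSigma)_{ij}=\frac1n\sum_m X_m(i)X_m(j)$ gives the stated entrywise bound. The algorithm recovers $G$ from $\overline\bSigma$ by thresholding at a level matched to the noise; the sample complexity $n\asymp|\bSigma|_\infty^2(s^4\theta^6+\log^2(r/\delta))\log(r/\delta)$ is calibrated so that (a) the threshold $|\bSigma|_\infty\sqrt{\log(r/\delta)/n}$ is smaller than the minimum true product $\min_{k,\,i\neq j\in I_k}|A_{ik}A_{jk}|$, and (b) rare spurious cancellations at multi-column collisions remain beneath detection. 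Gaussian anticoncentration for products, combined with a union over the $\sim s r^2\theta^2$ true edges, supplies a lower bound of order $1/(s^2\theta^3)$ on that minimum product, which is exactly what the $s^4\theta^6$ factor in $n$ matches. Once $G$ is recovered, the entrywise error $|\overline\bSigma-\bSigma|_\infty$ propagates through the ratio/product reconstruction to yield the advertised $O(|\bA|_\infty^2|\bSigma|_\infty\sqrt{\log(r/\delta)/n})$ bound.

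The main obstacle is the delicate matching between the threshold level and the smallest true signal: the regime $C\log(r/\delta)/r\leq\theta\leq c/(\sqrt s+\log(r/\delta))$ is precisely the one in which every column is sparse enough to keep collisions $|K_i\cap K_j|\geq 2$ rare, yet dense enough that the anticoncentration union bound does not ruin the smallest products $A_{ik}A_{jk}$. Handling the handful of row pairs at which the clique picture fails (multi-column collisions creating spurious edges or false zeros) without corrupting the recovered supports, and verifying the precise $s^4\theta^6$ scaling through Gaussian anticoncentration of products, is where I expect the bulk of the technical work to reside.
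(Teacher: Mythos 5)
There is a genuine gap, and it sits at the heart of your plan. You claim that since $\bbE|K_i\cap K_j|=s\theta^2\ll 1$, a union bound over row pairs shows that with probability $1-\delta/2$ \emph{no} pair of rows shares two or more support columns, so that the graph of nonzero off-diagonal entries decomposes into edge-disjoint cliques with vertex sets $I_k$. This is false in the theorem's regime: the upper bound on $\theta$ is $c/(\sqrt s+\log(r/\delta))$, so $s\theta^2$ is only a (small) \emph{constant}, $\prob(|R_i\cap R_j|\geq 2)=\Theta(s^2\theta^4)$ is a constant times $s^2\theta^4$, and the expected number of colliding pairs is $\Theta(r^2s^2\theta^4)$ --- typically enormous, and in particular a constant \emph{fraction} of the nonzero off-diagonal entries of $\bSigma$ are sums of two or more products. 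The clique decomposition therefore does not exist, and your exact identities $A_{ik}^2=\Sigma_{ij}\Sigma_{il}/\Sigma_{jl}$ and $A_{jk}/A_{lk}=\Sigma_{ij}/\Sigma_{il}$ fail on a constant fraction of triples/pairs. The paper's whole design is built to tolerate exactly this: the overlap condition \OC{h} only requires $6m_j+2|I_j\setminus\essenS_j|+h<|I_j|$, i.e.\ that corrupted entries are a \emph{minority} within each relevant restriction, and the algorithm uses a mode (step 1) and a median (step 2) of entrywise ratios so that a majority of clean entries suffices; pairs with $|R_{i_1}\cap R_{i_2}|\geq 2$ are not excluded but are caught by the verification step in the deflation, with the martingale argument only needing the clean pairs to make up a $5/6$ fraction.

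A second, related gap is the source of the $s^4\theta^6$ factor. Your route --- lower-bounding $\min_{k,i\neq j\in I_k}|A_{ik}A_{jk}|$ by anticoncentration plus a union bound over $\sim sr^2\theta^2$ edges and matching it to the detection threshold --- would give a bound of order $\delta/(sr^2\theta^2)$ (with poor $\delta$ dependence, and a logarithmic correction from the singularity of the Gaussian-product density at zero), which does not produce $s^4\theta^6$ and is not what the paper does. In the paper the minimum-signal issue is sidestepped entirely by the set $\essenS_j=\{i:|\ba_j(i)|\geq 1/10\}$ and the observation that $|I_j\setminus\essenS_j|$ is a small fraction of $|I_j|$; the $s^4\theta^6$ arises because $n\asymp|\bSigma|_\infty^2\log(r/\delta)/\varepsilon^2$ where $\varepsilon\asymp 1/(s^2\theta^3+\log(r/\delta))$ is the angular bin width needed so that the number of spurious indices $k\notin I_j$ whose ratio $\gamma_{i_2}(k)/\gamma_{i_1}(k)$ lands in a width-$\varepsilon$ bin (probability $\asymp\varepsilon w^2\theta^2$ per index with $w\asymp s\theta$, by rotation invariance of a Gaussian pair) stays below $r\theta/10$. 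So both the combinatorial backbone and the quantitative calibration of your proposal would need to be replaced before the argument could go through.
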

The quantity $|\bSigma|_\infty$ can also be characterized in terms of the largest squared norm of a row of $\bA$, i.e., $|\bSigma|_\infty=|\bA\bA^\top|_\infty+|\bD_\sigma|_\infty =\max_{i}|\rho_i|_2^2+\sigma_i^2$. 
Coupling $r,s$, and $\theta$ yields the following theorem, which gives a qualitative illustration of theorem \ref{thm:sampletheorem}. This asymptotic result uses a bound stating that $\max_{i}|\rho_i|_2^2$ concentrates around $\E|\rho_i|_2^2=s\theta$. 
\begin{corollary}\label{cor:simplifiedmain}
  Let $\bA\sim\BG(s,s,\theta)$, and let $\bSigma=\bA\bA^\top$, where
  \[\theta=s^{-\alpha}\]
for some fixed exponent $\frac12<\alpha<1$, then there is an algorithm which takes input $\bSigma$ and outputs $\hat\bA$, such that $d(\hat\bA-\bA)\to 0$ in probability over the randomness of $\bA$ as $s\to\infty$.
Moreover, the same guarantee holds for input $\ol\bSigma\sim\cW(\bSigma,n)$ if $n={s^{2\beta}}$, where
    \[\beta>(3-4\alpha)\vee (1-\alpha).\]
\end{corollary}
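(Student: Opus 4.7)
The plan is simply to apply Theorem~\ref{thm:sampletheorem} with $r=s$ and $\theta=s^{-\alpha}$ and translate the resulting quantitative bound into the claimed asymptotic statement. First, I would fix a slowly vanishing confidence parameter, say $\delta=\delta_s\to 0$ with $\log(s/\delta_s)=o(s^\eta)$ for every $\eta>0$ (for instance $\delta_s=1/\log s$), so that all occurrences of $\log(r/\delta)$ become polylogarithmic in $s$ and can be absorbed into $\widetilde O(\cdot)$ notation. The hypothesis $C\log(s/\delta_s)/s\le\theta\le c/(\sqrt s+\log(s/\delta_s))$ then reduces to checking $s^{1-\alpha}\gg\log s$ and $s^\alpha\gg\sqrt s$, which hold precisely because $\tfrac12<\alpha<1$; this is immediate for $s$ sufficiently large.

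The second ingredient is a concentration bound on $|\bSigma|_\infty=\max_i|\rho_i|_2^2$ (using $\bD_\sigma=0$). Each $|\rho_i|_2^2=\sum_{j=1}^s B_{ij}\xi_{ij}^2$ is a sum of independent subexponential variables of mean $\theta$, so by a Bernstein-type inequality
\[
|\rho_i|_2^2 \le s\theta + C'\bigl(\sqrt{s\theta\log(s/\delta_s)}+\log(s/\delta_s)\bigr)
\]
with probability $1-\delta_s/s$, and a union bound over $i\in[s]$ gives $|\bSigma|_\infty=O(s\theta)=O(s^{1-\alpha})$ with probability $1-\delta_s$, since $s\theta=s^{1-\alpha}$ dominates the lower-order terms for $\alpha<1$. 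A separate Gaussian maximum gives $|\bA|_\infty=O(\sqrt{\log s})$ with high probability.

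Next I would substitute these bounds into the sample-size requirement
\[
n=C|\bSigma|_\infty^{2}\bigl(s^{4}\theta^{6}+\log^{2}(r/\delta)\bigr)\log(r/\delta)=\widetilde O\!\left(s^{2(1-\alpha)}\bigl(s^{4-6\alpha}+1\bigr)\right),
\]
and split into two cases according to which term of the bracket dominates. When $\alpha\le 2/3$ the polynomial $s^{4-6\alpha}$ dominates, giving $n=\widetilde O(s^{6-8\alpha})$, so choosing $\beta>3-4\alpha$ suffices; when $\alpha>2/3$ the bracket is polylogarithmic and $n=\widetilde O(s^{2(1-\alpha)})$, so $\beta>1-\alpha$ suffices. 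The two cases amalgamate into the single condition $\beta>(3-4\alpha)\vee(1-\alpha)$, observing that the crossover occurs at $\alpha=2/3$. Finally the error bound of Theorem~\ref{thm:sampletheorem} becomes
\[
d(\hat\bA,\bA)=O\!\left(|\bA|_\infty^{2}|\bSigma|_\infty\sqrt{\log(s/\delta_s)/n}\right)=\widetilde O\!\left(s^{1-\alpha-\beta}\right),
\]
which tends to $0$ whenever $\beta>1-\alpha$; since $(3-4\alpha)\vee(1-\alpha)\ge 1-\alpha$, the error vanishes in the entire parameter regime considered. The population statement (input $\bSigma$ exactly) follows either by taking $n\to\infty$ in the estimate above, or equivalently by running the same algorithm on the exact covariance, in which case the sampling term disappears outright.

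The only mildly technical step is the concentration argument for $|\bSigma|_\infty$ together with the careful choice of $\delta_s$: one must ensure that the success probabilities of the event $|\bSigma|_\infty=O(s^{1-\alpha})$, of the event $|\bA|_\infty=O(\sqrt{\log s})$, and of the conclusion of Theorem~\ref{thm:sampletheorem} can all be driven to $1$ simultaneously while the logarithmic slack remains sublinear in every polynomial scaling. Since any $\delta_s\to 0$ slower than every polynomial (such as $1/\log s$) suffices, this is not a genuine obstacle; matching the two regimes of $\alpha$ to reproduce the $(3-4\alpha)\vee(1-\alpha)$ threshold is the sole place where care is required.
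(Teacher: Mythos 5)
Your proposal is correct and follows essentially the same route as the paper's own proof: apply Theorem~\ref{thm:sampletheorem} with $r=s$, use concentration of $|\bSigma|_\infty$ around $s\theta$ (the paper does this via Corollary~\ref{cor:AAbound} rather than a direct Bernstein bound, but the content is identical), bound $|\bA|_\infty=O(\sqrt{\log s})$ as a Gaussian maximum, and match exponents $s^{6-8\alpha}\vee s^{2-2\alpha}$ to obtain the threshold $\beta>(3-4\alpha)\vee(1-\alpha)$. The only cosmetic difference is your choice $\delta_s=1/\log s$ versus the paper's $\delta=1/s$, which is immaterial.
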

Theorem \ref{thm:sampletheorem} and corollary \ref{cor:simplifiedmain} are proven in section \ref{guarantee_section}.

\begin{figure}[H]
\centering
\resizebox{0.7\columnwidth}{!}{

\definecolor{maroon}{RGB}{163,31,52}
\definecolor{grey}{RGB}{240,240,240}

\begin{tikzpicture}

\begin{scope}[scale=4.5]

\fill [color=grey](0,0)rectangle(1,1);
\fill [color=maroon](0.5,1)--(2/3,1/3)--(1,0)--(1,1)--cycle;

\draw [->](0,0)--(1.1,0);
\draw [->](0,0)--(0,1.1);

\draw [dashed,color=gray](0,1)--(1,0);
\draw [dotted](0.5,0)--(0.5,1);
\draw [dotted](2/3,0)--(2/3,1/3);
\draw [dotted](0,1/3)--(2/3,1/3);

\node [right] at (1.1,0){$\alpha=\log_s\frac{1}{\theta}$};
\node [above] at (0,1.1){$\beta=\frac12\log_s n$};

\draw (0,-0.01)--(0,0.01);
\draw (0.5,-0.01)--(0.5,0.01);
\draw (2/3,-0.01)--(2/3,0.01);
\draw (1,-0.01)--(1,0.01);
\node [below] at (0,0){$0$};
\node [below] at (0.5,0){$\frac12$};
\node [below] at (2/3,0){$\frac23$};
\node [below] at (1,0){$1$};

\draw (-0.01,0)--(0.01,0);
\draw (-0.01,1/3)--(0.01,1/3);
\draw (-0.01,1)--(0.01,1);
\node [left] at (0,0){$0$};
\node [left] at (0,1/3){$\frac13$};
\node [left] at (0,1){$1$};

\node[below,rotate=-45] at (0.3,0.7){{\tiny $\beta=1-\alpha$}};

\end{scope}

\end{tikzpicture}

}
\caption{Sample complexity $n$ of recovering $\bA\sim\BG(cs,s,\theta)$ where ${\theta=s^{-\alpha}}$ and $n=s^{2\beta}$. The expected support size of a column of $\bA$ is $\bbE|\ba|_0=r^{1-\alpha}$, so the dashed line $\beta=1-\alpha$ represents choosing $n$ of the order $(\bbE|\ba|_0)^2$.}
\end{figure}
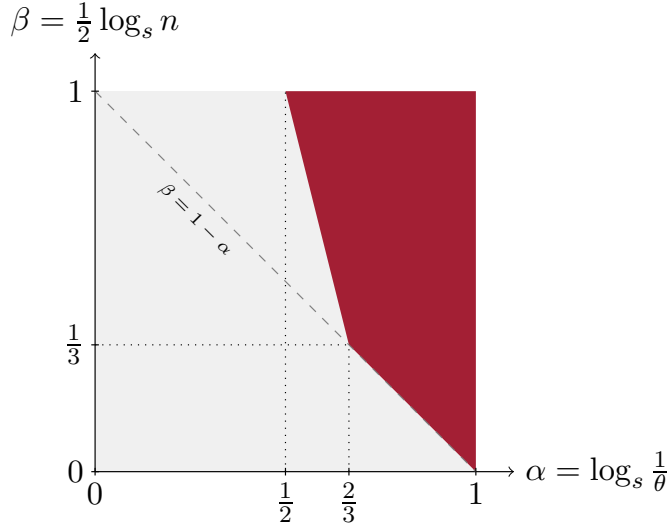


\section{Algorithm}
\label{errorless}
We now describe our algorithm, beginning in the population setting where $\bSigma$ is known exactly. The input to our algorithm is $\bSigma$, whose $i$th row we write as
\[\gamma_i^\top=e_i^\top\bSigma\]
We recover the columns of $\bA$ one at a time by applying the following Single Column Identification Procedure ($\subalgname$). $\subalgname$ takes as input $\bSigma$ and a pair of indices $(i_1,i_2)$, where $\Sigma_{i_1i_2}$ is a randomly chosen nonzero off-diagonal entry of $\bSigma$, and it outputs a column $\ba_j$, possibly with its sign changed. Denote the (unknown) supports of the columns and rows of $\bA$ by
\[I_j=\supp \ba_j\,\text{ and } R_i=\supp\rho_i\,\text{.}\]
$\Sigma_{i_1i_2}\neq0$, implies that $R_{i_1}\cap R_{i_2}\neq\emptyset$. Equivalently, $i_1,i_2\in I_j$ for some $j$. Using this fact, and supposing that there is only one such $j$ (this turns out to be the typical situation), $\subalgname$ outputs $\hat\ba=\pm\ba_j$. $\subalgname$ proceeds in two steps on input $\bSigma,(i_1,i_2)$.

The first step of $\subalgname$ finds a subset $L\subset I_j$ containing a large fraction of the unknown support $I_j$. To illustrate how this step works, assume $\bD_\sigma$ is the zero matrix\footnote{$\bD$ does not affect the output of $\subalgname$ because of its sparsity, so we assume $\bD_\sigma=0$ for notational convenience.}, and write $\bSigma$ as a sum of (unknown) rank one matrices
\begin{equation}\label{eq:rankone}\bSigma=\bM_1+\ldots+\bM_s\end{equation}
where $\bM_k=\ba_k\ba_k^\top$ for $k\in[s]$. Note that $\supp\bM_j=I_j\times I_j$. Then the $2$-by-$|I_j|$ matrix $\bM_j(\{i_1,i_2\}\times I_j)$ is a fully dense matrix of rank one. It turns out that because the supports of the matrices $\bM_1,\ldots,\bM_s$ have small overlaps, this property is approximately preserved when adding the contributions from the other $s-1$ terms in \eqref{eq:rankone}. That is, $\bSigma(\{i_1,i_2\}\times I_j)$ agrees $\bM_j(\{i_1,i_2\}\times I_j)$ on all but a small number of entries. Hence, $\bSigma(\{i_1,i_2\}\times I_j)$ can be made to have rank one by removing the columns where these entries appear. Equivalently, letting $L$ be the indices of the remaining columns, we get that $L\subset I_j$ is a set of indices such that $\bSigma(\{i_1,i_2\}\times L)$ is a rank one fully dense matrix. Therefore we can identify a large subset $L\subset I_j$ by picking the largest set $L$ such that $\bSigma(\{i_1,i_2\}\times L)$ is fully dense and has rank one. Another way of formulating this is that we define $L$ to be the largest set of indices such that $\gamma_{i_1}(L)$ and $\gamma_{i_2}(L)$ are fully dense and collinear. This concludes step 1 of $\subalgname$.

We now begin step 2 of $\subalgname$. At this stage we have $\gamma_{i_1}(L)=\lambda\ba_j(L)$ where $\lambda=\ba_j(i_1)$, so we could already make a crude approximation to $\lambda\ba_j$ by extending $\gamma_{i_1}(L)$ with zeroes outside of $L$. However, this approximation misses the entries in $I_j\setminus L$. Instead, consider the identity $\bM_j(\cdot\times L)diag(\ba_j(L))^{-1}=\ba_j\1^\top$. $\gamma_{i_1}$ is known to us and $\gamma_{i_1}(L)=\lambda\ba_j(L)$, so we can rewrite the identity as
\begin{equation}\label{eq:copies}\bM_j(\cdot\times L)diag(\gamma_{i_1}(L))^{-1}=\frac1\lambda\ba_j\1^\top,\end{equation}
The RHS is just $|L|$ copies of $\lambda^{-1}\ba_j$ written side by side. So if $\bM_j(\cdot\times L)$ were known, then we could easily recover $\ba_j$ up to a scalar by taking any column of \eqref{eq:copies}. It turns out that replacing $\bM_j$ by $\bSigma$ in \eqref{eq:copies} changes only a small fraction of the entries in each row. Hence, each row of
\[\lambda\bSigma(\cdot\times L)diag(\gamma_{i_1}(L))^{-1}-\ba_j\1^\top\]
is sparse. Moreover, both $\bSigma(\cdot\times L)$ and $diag(\gamma_{i_1}(L))^{-1}$ are known to us, so we can compute $\bSigma(\cdot\times L)diag(\gamma_{i_1}(L))^{-1}$. Now it is easy to compute $\tilde\ba=\lambda^{-1}\ba_j$, since its $i$th entry is repeated several times in the $i$th row of $\bSigma(\cdot\times L)diag(\gamma_{i_1}(L))^{-1}$. This row can also be written $\gamma_i(L)/\gamma_{i_1}(L)$. We identify $|\lambda|=|\ba_j(i_1)|$ using the fact that $\Sigma_{i_1i_2}=\ba_j(i_1)\ba_j(i_2)$ when $R_{i_1}\cap R_{i_2}=\{j\}$, and we output $|\lambda|\tilde\ba=\pm\ba_j$.
We have motivated the following procedure.
\paragraph{Step 1.}
Take input $\bSigma=(\gamma_1,\ldots,\gamma_r)^\top$ and $(i_1,i_2)$. Intersect the supports of the $i_1$st and $i_2$nd rows of $\bSigma$ and store the resulting set of indices as $K\subset[r]$. 
Compute the mode (most frequent value) $\hat\varphi$ of the entrywise ratio $\gamma_{i_2}(K)/\gamma_{i_1}(K)=(\gamma_{i_2}(k)/\gamma_{i_1}(k))_{k\in K}$. Then let $L\subset K$ be the set of indices where the mode is attained.
\paragraph{Step 2.}
Restrict attention to the submatrix $\Sigma(\cdot\times L)$ consisting of the columns indexed by $L$. Construct vector $\tilde\ba\in\R^r$ by defining $\tilde\ba(i)$ to be the median of $\gamma_i(L)/\gamma_{i_1}(L)$ for each $i=1,\ldots,r$. Multiply $\tilde\ba$ by a scalar and output the resulting rescaled vector $\hat\ba$. 

\begin{figure}
  \begin{center}
 \resizebox{\columnwidth}{!}{

\definecolor{maroon}{RGB}{163,31,52}
\definecolor{grey}{RGB}{210,210,210}
\definecolor{darkgray}{RGB}{70,70,70}
\newcommand{\drawmatrix}[4]{\draw [#3](#1,10-#1) rectangle (#2,10-#2)node[midway,color=black,opacity=1]{#4};
}
\newcommand{\fillmatrix}[4]{\fill [#3] (#1,10-#1) rectangle (#2,10-#2)node[midway,color=white,opacity=1]{#4};
}
\newcommand{\filldiscontinuous}[6]{
\fill [#5] (#1,10-#1) rectangle (#2,10-#2)node[midway,color=white,opacity=1]{#6};
\fill [#5] (#3,10-#3) rectangle (#4,10-#4)node[midway,color=white,opacity=1]{#6};

\fill [#5] (#1,10-#3) rectangle (#2,10-#4)node[midway,color=white,opacity=1]{#6};
\fill [#5] (#3,10-#1) rectangle (#4,10-#2)node[midway,color=white,opacity=1]{#6};
}
\newcommand*{\thatheight}{0.15}

\begin{tikzpicture}
  
\begin{scope}[scale=0.6]
\drawmatrix{2}{7}{fill=grey}{$j=3$};
\fillmatrix{1.5}{3}{color=gray}{$2$};

\filldiscontinuous{1}{1.8}{2.8}{3.5}{color=darkgray}{$1$};
\filldiscontinuous{5}{5.7}{6.5}{7.5}{color=darkgray}{$4$};

\draw [arrows={-angle 90},color=maroon,ultra thick,dotted] (5.25,10-3)--(5.25,9.6);

\fill [thick,color=maroon] (3.55,10-2.4+\thatheight) rectangle (6.95,10-2.4-\thatheight);
\fill [thick,color=maroon] (3.55,10-3.2+\thatheight) rectangle (6.95,10-3.2-\thatheight);


\draw [thick,|-|] (3.5,10)--(7,10) node [midway,above]{$L$};

\node at (7.5,10-2.4){$i_1$};
\node at (7.5,10-3.2){$i_2$};
\node at (0.9,11)[right]{\bf Step 1};
\end{scope}

\begin{scope}[scale=0.6,xshift=14cm]

\drawmatrix{2}{7}{fill=grey}{};
\fillmatrix{1.5}{3}{color=gray}{};

\filldiscontinuous{1}{1.8}{2.8}{3.5}{color=darkgray}{};
\filldiscontinuous{5}{5.7}{6.5}{7.5}{color=darkgray}{};


\fill[color=white,opacity=0.8](1,1.8)rectangle(3.5,10);
\fill[color=white,opacity=0.8](7,1.8)rectangle(7.5,10);

\draw [dotted,ultra thick,color=maroon] (3.5,9.6)--(3.5,2);
\draw [dotted,ultra thick,color=maroon] (7,9.6)--(7,2);

\draw [thick,|-|] (3.5,10)--(7,10) node [midway,above]{$L$};
\node at (0.9,11)[right]{\bf Step 2};

\end{scope}

\end{tikzpicture}

}
\end{center}
\caption{{\bf Left. }Illustration of $\bSigma=\bM_1+\cdots+\bM_4$ where $\bM_k$ is labeled by $k$, for $k=1,2,3,4$. $I_j$ is unknown except for the fact that it contains $i_1$ and $i_2$. In step 1 and on input $(\bSigma,(i_1,i_2))$, $\subalgname$ searches for a set $L$ such that ${\bSigma(\{i_1,i_2\}\times L)}$ (maroon) is a rank one, fully dense, submatrix of $\bSigma$. {\bf Right.} Step 2 recovers $\ba_j$ from the submatrix $\bSigma(\cdot\times L)$ (between the dotted lines), using that $\bSigma(\cdot\times L)$ agrees with $\bM_j(\cdot\times L)$ except for sparse errors (dark gray).}
\end{figure}

\begin{algorithm}[H]
  \floatname{algorithm}{Procedure}
  \label{alg:subalg}
  \caption{Population \SUBALGNAME}
  \begin{algorithmic}[1]
    \STATE {\bf input: }$\bSigma=(\gamma_1\ldots\gamma_r)^\top$, $(i_1,i_2)$
    \STATE $K\assign \supp\gamma_{i_1}\cap\supp\gamma_{i_2}$
\STATE $\hat\varphi\assign\mode(\gamma_{i_2}(K)/\gamma_{i_1}(K))$
\STATE $L\assign \{k\in K\:|\:\gamma_{i_2}(k)/\gamma_{i_1}(k)=\hat\varphi\}$
\FOR{$i=1,\ldots,r$}\label{loop}
\STATE $\tilde\ba(i)\assign \operatorname{median}(\gamma_i(L)/\gamma_{i_1}(L))$
\ENDFOR
    \STATE {\bf output: }$\hat\ba=\sqrt{\tfrac{\Sigma_{i_1i_2}}{\tilde\ba(i_2)}}\tilde\ba$ 
  \end{algorithmic}
\end{algorithm}
\paragraph{Deflation.}
The algorithm to construct $\hat\bA$ on input $\bSigma$ works by repeatedly applying $\subalgname$ and a deflation step replacing $\bSigma$ with $\bSigma-\hat\ba\hat\ba^\top$. First, initialize $\hat\bA$ as a width 0 matrix. At the start of each iteration, pick a nonzero entry $\Sigma_{i_1i_2}$, and assign $\hat\ba\assign\subalgname(\bSigma,(i_1,i_2))$. Append $\hat\ba$ to $\hat\bA$, and subtract $\hat\ba\hat\ba^\top$ from $\bSigma$. Repeat the above procedure until $\bSigma$ is diagonal, and output $\hat\bA$.
A potential problem with the deflation procedure is that an incorrect output from $\subalgname$ column could impede the subsequent applications of $\subalgname$. And indeed, $\subalgname$ fails for the (atypical) pairs $i_1,i_2$ such that $|R_{i_1}\cap R_{i_2}|\geq 2$. To overcome the problem of an incorrect output of $\subalgname$ we let $I=\supp \hat\ba$ and verify that $\bSigma(I\times I)-\hat\ba(I)\hat\ba(I)^\top$ is sparser than $\bSigma(I\times I)$. This ensures that $\hat\ba$ is correct. Only then do we $\hat\ba$ append to $\hat\bA$ and deflate $\bSigma$ by subtracting $\hat\ba\hat\ba^\top$.

\section{Finite sample case}

When the input to $\subalgname$ is an approximation $\overline\bSigma=(\overline\gamma_1,\ldots,\overline\gamma_r)^\top$ to $\bSigma$ we have to relax the requirement that $\gamma_{i_1}(L)$ and $\gamma_{i_2}(L)$ be collinear. The statement that the two vectors $\gamma_{i_1}(L)$ and $\gamma_{i_2}(L)$ in $\R^L$ are collinear is equivalent with saying that the points $(\gamma_{i_1}(k),\gamma_{i_1}(k))\in R^2$ are collinear, where $k$ ranges over $L$. We take this as the starting point of the relaxed definition of $L$. First, to bound the noise/signal ratio we pick a small constant $c>0$ and disregard points within a distance $c$ to a coordinate axis in $\R^2$. This means that $K=\supp\gamma_{i_1}\cap\supp\gamma_{i_2}$ is replaced by $K=\{k\in[r]:|\ol\gamma_{i_1}(k)|\wedge|\ol\gamma_{i_2}(k)|\geq c\}$. We then approximate the entries of $\overline\gamma_{i_2}/\overline\gamma_{i_1}$ by values in a discrete set $Z\subset \R$. This corresponds to placing the points $(\overline\gamma_1(k),\overline\gamma_2(k))$ into bins where each bin is a cone in $\R^2$.
The appropriate choice of discretization is $Z=\pm e^{\varepsilon\bbZ}$ where
\[\pm e^{\varepsilon\bbZ}=\big\{\varphi\in\R\backslash\{0\}\:\big|\:\tfrac1\varepsilon\log|\varphi|\in\bbZ\big\}.\]
In particular, the set of bins arising from this discretization is symmetric about the axis ${\{(x,x)|x\in\R\}}$ because $y/x=\pm e^{\varepsilon m}$ implies $x/y=\pm e^{-\varepsilon m}\in \pm e^{\varepsilon\Z}$.

\begin{definition}\label{def:L_def}
  For vectors $\gamma_1,\gamma_2\in\R^K$, $\varphi\in\R\setminus\{0\}$, and $\varepsilon>0$, define
  \[L_{\varphi,\varepsilon}(\gamma_1,\gamma_2)=\Big\{k\in K\Big|\:\:e^{-\varepsilon}\leq\varphi^{-1}\frac{\gamma_2(k)}{\gamma_1(k)}< e^\varepsilon\Big\}.\]
\end{definition}
Pick $\varepsilon>0$ such that $c\varepsilon/4>|\overline\bSigma-\bSigma|_\infty$. Such $\varepsilon$ can be estimated as from $\ol\bSigma$ using the fact that a large fraction of the entries of $\bSigma$ are zero.

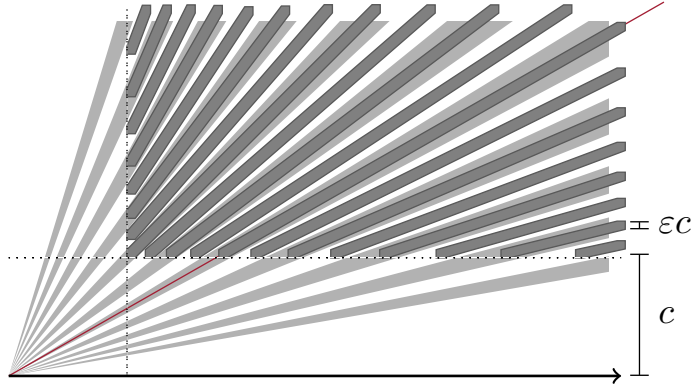
\begin{figure}
  \centering
  \resizebox{0.8\columnwidth}{!}{

\definecolor{maroon}{RGB}{163,31,52}
\definecolor{grey}{gray}{0.7}
\definecolor{darkgray}{gray}{0.35}
\newcommand*{\threshold}{0.01}
\newcommand*{\epsi}{0.07}
\newcommand*{\hgt}{0.03}
\newcommand*{\wdh}{0.05}
\newcommand*{\minwdh}{0.0}
\pgfmathsetmacro{\epsscaled}{\epsi*\threshold/2}

\begin{tikzpicture}[scale=6.5/\wdh]

\begin{scope}
\clip (\minwdh,0) rectangle (\wdh-0.001,\hgt-0.001);

  \foreach \k in {-16,-12,...,24}{
  \pgfmathsetmacro{\a}{{exp(-(\k+1)*\epsi)}};
  \pgfmathsetmacro{\actualwdh}{min(\wdh,\hgt/\a)};
\fill[color=grey] (0,0)--(\actualwdh,{\a*\actualwdh})--(\actualwdh,{exp(-(\k-1)*\epsi)*\actualwdh})--cycle;  
  }
\end{scope}

\draw[color=maroon](0,0)--(1.07*\wdh,{1.07*\wdh*exp(-8*\epsi)});

\foreach \k in {-14,-12,...,22}{
  \pgfmathsetmacro{\a}{{exp(-\k*\epsi)}};
  \pgfmathsetmacro{\actualwdh}{min(\wdh,\hgt/\a)};
  \pgfmathsetmacro{\actualminwdh}{max(\threshold,\threshold/\a)};
  
  \draw[color=darkgray,fill=gray] (\actualminwdh-\epsscaled,\a*\actualminwdh+\epsscaled)--(\actualminwdh-\epsscaled,\a*\actualminwdh-\epsscaled)--(\actualminwdh+\epsscaled,\a*\actualminwdh-\epsscaled)--(\actualwdh+\epsscaled,\a*\actualwdh-\epsscaled)--(\actualwdh+\epsscaled,\a*\actualwdh+\epsscaled)--(\actualwdh-\epsscaled,\a*\actualwdh+\epsscaled)--cycle;
}

\draw[dotted,fill=white] (\minwdh,\threshold-\epsscaled) rectangle (\wdh,\threshold-\epsscaled);
\draw[dotted,fill=white] (\threshold-\epsscaled,0) rectangle (\threshold-\epsscaled,\hgt);
\draw[thick,->](\minwdh,0)--(\wdh,0);
\draw[|-|](1.03*\wdh,0)--(1.03*\wdh,\threshold);
\node[right] at (1.04*\wdh,\threshold/2){$c$};
\draw[|-|,left](1.03*\wdh,1.2*\threshold)--(1.03*\wdh,1.2*\threshold+2*\epsscaled);
\node[right] at (1.04*\wdh,1.2*\threshold+\epsscaled){$\varepsilon c$};

\end{tikzpicture}

  }
  \caption{In the population case, the points $\{(\gamma_{i_1}(k),\gamma_{i_2}(k))\:|\:k\in L\}$ lie on a line through the origin (maroon), say, with slope $\varphi$. If $(\gamma_{i_1}(k),\gamma_{i_2}(k))$ is bounded away from the coordinate axes by at least $c$, then the corresponding approximation $(\ol\gamma_{i_1}(k),\ol\gamma_{i_2}(k))$ is in a bar-shaped set (dark gray). Each bar-shaped set is contained in a cone-shaped one (light gray), which implies that $e^{-\varepsilon}\leq\varphi^{-1}{\ol\gamma_{i_2}(k)}/{\ol\gamma_{i_1}(k)}< e^\varepsilon$.}
\label{anglebarfigure}
\end{figure}

\begin{algorithm}
  \floatname{algorithm}{Procedure}
  \label{alg:noisysubalg}
  \caption{\SUBALGNAME}
  \begin{algorithmic}[1]
    \STATE {\bf input: }$\ol\bSigma=(\ol\gamma_1\ldots\ol\gamma_r)^\top$, $(i_1,i_2)$, $c>\varepsilon>0$
    \STATE $K\assign\{k\in[r]\::\:|\ol\gamma_{i_1}(k)|\wedge|\ol\gamma_{i_2}(k)|\geq c\}$
\STATE $\hat\varphi\assign\argmax_{\varphi\in\pm e^{\varepsilon\Z}}|L_{\varphi,\varepsilon}(\ol\gamma_{i_1}(K),\ol\gamma_{i_2}(K))|$
\STATE $L\assign L_{\hat\varphi,2\varepsilon}(\ol\gamma_{i_1}(K),\ol\gamma_{i_2}(K))$

\FOR{$i=1,\ldots,r$}
\STATE $\tilde\ba(i)\assign \operatorname{median}(\ol\gamma_i(L)/\ol\gamma_{i_1}(L))$
\ENDFOR
    \STATE {\bf output: }$\hat\ba=\sqrt{\tfrac{\ol\Sigma_{i_1i_2}}{\tilde\ba(i_2)}}\tilde\ba$ 
  \end{algorithmic}
\end{algorithm}
The $\argmax$ over $\varphi\in{\pm e^{\varepsilon\bbZ}}$ can be computed by computing two modes. First divide $K$ into two sets $K_+$ and $K_-$ according to the sign of $\ol\gamma_{i_2}(k)/\ol\gamma_{i_1}(k)$. Then for $K'=K_+,K_-$ make a list of $|K'|$ integers $\lfloor\frac1\varepsilon\log|\ol\gamma_{i_2}(k)/\ol\gamma_{i_2}(k)|\rfloor$ and take the mode, pairing neighboring integers (for example by appending a copy of the list with $1$ added to each integer). This yields a mode for each set $K_+,K_-$, and we finish by taking the most frequent of the two.



\section{Structure of the mixing matrix}
The proof of theorem \ref{thm:sampletheorem} uses the fact that $\bA\sim\BG(r,s,\theta)$ satisfies a condition \OC{h}, which we define below, and which captures the generic structure of $\bA$.

\begin{definition}\label{singleoverlap}
For indices $i,j$ let
\[C_{j,i}=\bigcup_{k\in R_i\setminus\{j\}} I_j\cap I_k\setminus \{i\},\]
and define $m_j=\max_{i=1,\ldots,r}|C_{j,i}|$.
\end{definition}
A graphical illustration of the sets $C_{i,j}$ is given in Figure~\ref{Cji_figure}.  The following \emph{overlap condition} (\OC{h}) is required for our recovery guarantee.

\begin{condition}[\OC{h}]\label{OC}
 Let $h\geq0$ and $\essenS_j=\{i:|\ba_j(i)|\geq\frac{1}{10}\}\subset I_j$. We say that $\bA$ satisfies the overlap condition \OC{h} if for every $j$,
  \[6m_j+2|I_j\setminus\essenS_j|+h<|I_j|.\]
\end{condition}
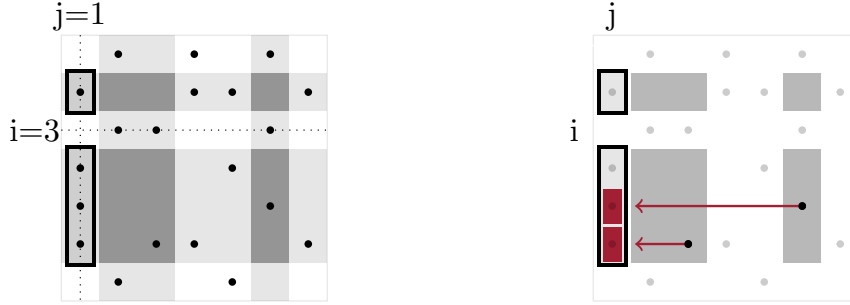
\begin{figure}
\label{FIG:singleoverlap}
\centering
\resizebox{\columnwidth}{!}{
\definecolor{maroon}{RGB}{163,31,52}
\definecolor{grey}{RGB}{230,230,230}

\def\r{7}
\def\s{7}
\def\list{
2/1,4/1,5/1,
3/2,1/2,7/2,
3/3,6/3,
1/4,2/4,6/4,
2/5,4/5,7/5,
1/6,3/6,5/6,
2/7,6/7,6/1}

\def\Sj{2,4,5,6}
\def\Ri{2,3,6}

\def\redlist{6/3,5/6}
\def\Cji{5,6}

\def\i0{3}
\def\j0{1}


\begin{tikzpicture}

\begin{scope}[yscale=-0.45,xscale=0.45]
\foreach \i in \Sj
	\fill(\j0-0.4,\i-0.6) rectangle (\j0+0.4,\i+0.6);
\foreach \i in \Sj
	\fill[color=grey](\j0-0.3,\i-0.5) rectangle (\j0+0.3,\i+0.5);

\draw[color=grey](0.5,0.5)rectangle(\r+0.5,\s+0.5);
	
\draw[dotted] (0.5,\i0)--(\s+0.5,\i0);
\draw[dotted] (\j0,0.5)--(\j0,\r+0.5);
\foreach \i in \Sj
	\fill[opacity=0.1](0.5,\i-0.5) rectangle (\s+0.5,\i+0.5);
\foreach \j in \Ri
	\fill[opacity=0.1](\j-0.5,0.5) rectangle (\j+0.5,\r+0.5);

\foreach \i in \Sj	
	\foreach \j in \Ri
		\fill[opacity=0.28](\j-0.5,\i-0.5) rectangle (\j+0.5,\i+0.5);

\foreach \i/\j in \list
	\fill[color=black](\j,\i) circle (0.1);

\node at (\j0,0){j=1};
\node at (-0.2,\i0){i=3};
\end{scope}

\begin{scope}[yscale=-0.45,xscale=0.45,xshift=14cm]
	
\draw[color=grey](0.5,0.5)rectangle(\r+0.5,\s+0.5);

\draw[color=white] (-0.5,0)--(\s+1.5,\r);

\foreach \i in \Sj
	\fill(\j0-0.4,\i-0.6) rectangle (\j0+0.4,\i+0.6);
\foreach \i in \Sj
	\fill[color=grey](\j0-0.3,\i-0.5) rectangle (\j0+0.3,\i+0.5);
	
\foreach \i in \Sj	
	\foreach \j in \Ri
		\fill[opacity=0.28](\j-0.5,\i-0.5) rectangle (\j+0.5,\i+0.5);	
	
\foreach \i in \Cji{
	\fill
	[color=maroon](\j0-0.25,\i-0.45) rectangle (\j0+0.25,\i+0.45);}

\foreach \i/\j in \list
	\fill[opacity=0.2](\j,\i) circle (0.1);
\foreach \i/\j in \redlist{
	\fill(\j,\i) circle (0.1);
	\draw[->,thick,color=maroon](\j,\i)--(1.6,\i);
	\fill(\j,\i) circle (0.1);
	}

\node at (\j0,0){j};
\node at (0,\i0){i};

\end{scope}
\end{tikzpicture}

}
\caption{Illustration of $C_{j,i}\subset I_j$ for $i=3$ and $j=1$. The sparsity pattern of $\bA$ is shown with black dots, and $I_1$ is indicated with a black border. $C_{1,3}$ (maroon) is defined as the union of the column supports of the submatrix $\bA_{I_1\times R_3}$ shown in dark gray.}
\label{Cji_figure}
\end{figure}
\OC{h} is a condition on $\bA$ which ensures that each square $I_j\times I_j$ has a small overlap with $\bigcup_{k\neq j}I_k\times I_k$. This holds in the regime $\theta\ll 1/\sqrt s$ where $\bSigma=\sum_k\ba_k\ba_k^\top$ is sparse, which is clear by considering it as a sparsity condition on $\sum_{k\neq j}\ba_k(I_j)\ba_k^\top(I_j)$. More precisely we have the following theorem.
\begin{theorem}\label{thm:OC_theorem}
Let $\bA\sim\BG(r,s,\theta)$. There exists a choice of constants $C,c>0$ such that if
\begin{equation}\label{eq:OC_requirement}
C\frac{\log(r/\delta)}{r}
\le \theta \le 
\frac{c}{\sqrt{s}+\log(r/\delta)}\,,
\end{equation}
then $\bA$ satisfies condition \OC{\frac{r\theta}{10}} with probability at least $1-\delta$, 
\end{theorem}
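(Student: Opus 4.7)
Under a high-probability event, $\OC{r\theta/10}$ reduces to three bounds per column $j$: (a) $|I_j|\ge r\theta/2$; (b) $|I_j\setminus\essenS_j|\le |I_j|/10$; and (c) $m_j\le r\theta/100$. Indeed, given these and using $r\theta\le 2|I_j|$,
\[
6m_j + 2|I_j\setminus\essenS_j| + \tfrac{r\theta}{10}
\;\le\; \tfrac{6r\theta}{100} + \tfrac{|I_j|}{5} + \tfrac{r\theta}{10}
\;\le\; \tfrac{13}{25}|I_j| \;<\; |I_j|.
\]
So it suffices to prove (a)--(c), uniformly over $j\in[s]$ (and over $i\in[r]$ for (c)), with total failure probability at most $\delta$. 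Note also that combining $\theta\le c/\sqrt{s}$ with $\theta\ge C\log(r/\delta)/r$ forces $s\le (c/C)^2 r^2/\log^2(r/\delta)$, so $\log s$, $\log(rs/\delta)$, and $\log(r/\delta)$ are all comparable up to absolute constants.

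Bounds (a) and (b) are routine. For (a), $|I_j|\sim\Bin(r,\theta)$, and the lower bound $\theta\ge C\log(r/\delta)/r$ makes the expectation $r\theta$ large enough for a Chernoff bound combined with a union bound over $j$ to give (a) with probability at least $1-\delta/3$. For (b), conditional on $I_j$, the nonzero entries $\xi_{ij}$, $i\in I_j$, are iid $\cN(0,1)$; since $p:=\Pr(|\cN(0,1)|<1/10)<0.08$, we have $|I_j\setminus\essenS_j|\sim\Bin(|I_j|,p)$, and Chernoff plus union over $j$ yields (b) with probability at least $1-\delta/3$.

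Step (c) is the crux. Fix $(i,j)$ and condition on the row $R_i$. For each $i'\neq i$, whether $i'\in C_{j,i}$ depends only on entries in row $i'$ of $\bA$, namely $B_{i'j}$ and $\{B_{i'k}:k\in R_i\setminus\{j\}\}$; hence the indicators $\mathbb{1}[i'\in C_{j,i}]$ are mutually independent across $i'$, each with mean at most
\[
\Pr(B_{i'j}=1)\cdot\big(1-(1-\theta)^{|R_i|}\big)\le\theta^2|R_i|.
\]
A first Chernoff bound on $|R_i|\sim\Bin(s,\theta)$ gives, uniformly in $i$ and with probability $\ge1-\delta/6$, the bound $|R_i|\le T:=2s\theta+C_1\log(r/\delta)$. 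Conditioning on this, a second Chernoff bound yields $|C_{j,i}|\le 2rT\theta^2+C_2\log(r/\delta)$, and a union bound over the $rs$ pairs $(i,j)$ closes the step with failure probability at most $\delta/6$.

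It remains to check $2rT\theta^2+C_2\log(r/\delta)\le r\theta/100$. Expanding, $2rT\theta^2=4rs\theta^3+2C_1 r\theta^2\log(r/\delta)$: the hypothesis $\theta\le c/\sqrt{s}$ gives $rs\theta^3\le c^2 r\theta$, and $\theta\log(r/\delta)\le c$ (from $\theta\le c/(\sqrt{s}+\log(r/\delta))$) gives $r\theta^2\log(r/\delta)\le cr\theta$. The leftover additive term $C_2\log(r/\delta)$ is absorbed into $r\theta/C$ via $\theta\ge C\log(r/\delta)/r$. Choosing $c$ small and $C$ large makes the total at most $r\theta/100$, yielding (c). The main obstacle is this concentration step: one must set up the conditioning so that the row-indexed events truly become independent, and the balancing of the two error terms $rs\theta^3$ and $r\theta^2\log(r/\delta)$ is precisely what forces the double-sided regime~\eqref{eq:OC_requirement}.
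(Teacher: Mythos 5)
Your proposal is correct and follows essentially the same route as the paper: you split \OC{r\theta/10} into binomial concentration of $|I_j|$, of $|I_j\setminus\essenS_j|$, and of $|C_{j,i}|$ after conditioning on the row support $R_i$, with the same balancing of the terms $s\theta^2$, $\theta\log(r/\delta)$, and $\log(r/\delta)/(r\theta)$ against the two-sided hypothesis on $\theta$ (the paper's Lemmas \ref{submatrixlemma} and \ref{lem:overlapbound}). The only cosmetic difference is that you integrate out $I_j$ and bound $m_j$ by the absolute quantity $r\theta/100$, whereas the paper also conditions on $I_j$ and bounds $m_j/|I_j|$ by $1/9$; both yield the same conclusion.
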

This theorem requires implicitly that $r/\log(r)\ge C  \sqrt{s}$.  In essence,~\eqref{eq:OC_requirement} enforces that, with high probability, each row of $\bA$ has at most $\sqrt{s}$ non-zero entries.

The subprocedure $\subalgname$ identifies a large subset $I\subset I_j$ by searching for $I$ of size $h=r\theta/10$ such that $\bSigma((i_1,i_2)\times I)$ is nearly singular. If such approximately singular $2$-by-$h$ submatrices appear by chance in $\bSigma$, then this step fails. We therefore define $h_\varepsilon(\bA)$ as the maximum width such that this occurs. As the approximation $\overline\bSigma$ gets better, i.e. as $\varepsilon$ decreases, our notion of approximately singular becomes more restrictive. This implies that $h_\varepsilon(\bA)$ is an increasing function of $\varepsilon$ (decreasing in the accuracy $1/\varepsilon$).

\begin{definition}\label{def:h_definition}
   For $\bA=(\ba_1,\ldots,\ba_s)=(\rho_1,\ldots,\rho_r)^\top$ with $I_j=\supp\ba_j$ and $\varepsilon\geq0$, define
  \[h_\varepsilon(\bA)=\max_j\max_{(i_1,i_2)\in I_j}\sup_{\varphi\neq 0}|L_{\varphi,\varepsilon}(\bA\rho_{i_1},\bA\rho_{i_2})\backslash I_{j}|\text{.}\]
  \end{definition}
Akin to the restricted isometry property pervasive to compressed sensing, the randomly generated matrix $\bA$ satisfies condition \OC{h_\varepsilon(\bA)} with high probability.

\begin{theorem}\label{thm:DC_theorem}
  Let $\bA\sim\BG(r,s,\theta)$. There exists a choice of constant $c>0$ such that if \eqref{eq:OC_requirement} holds and
\begin{equation}\label{epsilonchoice}{\varepsilon=\frac{c}{s^2\theta^3+\log(r/\delta)}},\end{equation}
then $h_\varepsilon(\bA)\leq\frac{r\theta}{10}$ with probability $1-\delta$.
\end{theorem}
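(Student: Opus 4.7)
The plan is to fix $j$ and a pair $(i_1,i_2)\in I_j\times I_j$, condition on rows $\rho_{i_1},\rho_{i_2}$ of $\bA$, and exploit that $\gamma_{i_1}(k)=\langle\rho_k,\rho_{i_1}\rangle$ and $\gamma_{i_2}(k)=\langle\rho_k,\rho_{i_2}\rangle$ depend on row $k$ of $\bA$ only. Consequently, after conditioning on rows $i_1,i_2$, the pairs $\{(\gamma_{i_1}(k),\gamma_{i_2}(k))\}_k$ are independent across the remaining row indices $k$; this independence is the structural fact on which the whole argument hinges. Throughout, I would work inside the high-probability event that $|R_i|\lesssim s\theta$ for every $i$ and that $\bA$ satisfies the conclusion of Theorem~\ref{thm:OC_theorem}.

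The quantitative core is a uniform single-index estimate: for each $k\notin I_j$ and each $\varphi\neq 0$,
\[
\Pr\bigl[\gamma_{i_2}(k)/\gamma_{i_1}(k)\in[\varphi e^{-\varepsilon},\varphi e^\varepsilon)\bigr]\,\lesssim\,s^2\theta^4\,\varepsilon.
\]
The $s^2\theta^4$ factor upper-bounds the probability that both coordinates are nonzero: since $k\notin I_j$ forces $j\notin\supp\rho_k$, this event requires $\supp\rho_k$ to meet both $R_{i_1}\setminus\{j\}$ and $R_{i_2}\setminus\{j\}$; once $|R_{i_1}\cap R_{i_2}|$ is bounded by a binomial tail, the two meetings are essentially independent and each contributes an $O(s\theta^2)$ factor. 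The $\varepsilon$ factor comes from the observation that, conditional on $T_k=\supp\rho_k$, the pair $(\gamma_{i_1}(k),\gamma_{i_2}(k))$ is centered bivariate Gaussian, so $\log|\gamma_{i_2}(k)/\gamma_{i_1}(k)|$ has a density bounded by $O(1/\sqrt{1-\varrho^2})$ uniformly in $\varphi$, where $\varrho$ is the cosine angle between the Gaussian vectors $\rho_{i_1}(T_k)$ and $\rho_{i_2}(T_k)$. Isotropy keeps $\varrho$ away from $\pm 1$ with exponentially high probability when $|T_k|\geq 2$; the $|T_k|=1$ case, where the ratio is deterministic, contributes at most $O(|R_{i_1}\cap R_{i_2}|)$ extra indices, which is absorbed.

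Given this single-index bound, for a fixed $\varphi$ the count of bad $k$'s is a sum of at most $r$ independent indicators of mean $O(s^2\theta^4\varepsilon)$, to which Bernstein's inequality gives a tail bound of the form $C(rs^2\theta^4\varepsilon+\log(1/\delta'))$. I would then discretize $\varphi$ over $\pm e^{\varepsilon\bbZ}$ restricted to the poly$(r)$ range of attainable ratios (an $O(\log(r/\delta)/\varepsilon)$-point grid), recovering the supremum over continuous $\varphi$ by absorbing a factor of $2$ into $\varepsilon$. Union-bounding over the $s$ values of $j$, the $O((r\theta)^2)$ pairs $(i_1,i_2)\in I_j^2$, and the $\varphi$-grid leaves $\log(1/\delta')=O(\log(r/\delta))$ in our regime. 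The final requirement $rs^2\theta^4\varepsilon+\log(r/\delta)\leq r\theta/10$ splits into $\varepsilon\lesssim 1/(s^2\theta^3)$ and $\log(r/\delta)\lesssim r\theta$, both enforced by \eqref{eq:OC_requirement} and the choice $\varepsilon=c/(s^2\theta^3+\log(r/\delta))$. The hardest part to pin down is the uniform-in-$\varphi$ log-density bound: one must verify that the random correlation $\varrho(T_k)$ stays bounded away from $\pm 1$ with high enough probability and carefully treat the near-degenerate small-support configurations.
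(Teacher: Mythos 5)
Your outline follows essentially the same route as the paper's proof of Theorem~\ref{thm:DC_theorem}: condition on $\rho_{i_1},\rho_{i_2}$, use independence of the pairs $(\gamma_{i_1}(k),\gamma_{i_2}(k))$ across $k$, establish a per-index probability of order $s^2\theta^4\varepsilon$ (support-meeting probability times an angular width), concentrate the count of bad indices with a binomial tail, and union bound. Two points of comparison. First, the step you single out as hardest --- keeping the conditional correlation $\varrho(T_k)$ away from $\pm1$ --- in fact dissolves: since $k\notin I_j$ forces $A_{kj}=0$ and $R_{i_1}\cap R_{i_2}=\{j\}$, the sums $\gamma_{i_1}(k)=\sum_{l\in R_{i_1}\setminus\{j\}}A_{kl}A_{i_1l}$ and $\gamma_{i_2}(k)=\sum_{l\in R_{i_2}\setminus\{j\}}A_{kl}A_{i_2l}$ run over \emph{disjoint} column sets, so conditionally on the supports they are exactly independent centered Gaussians, $\varrho\equiv0$, and the uniform-in-$\varphi$ bound is the clean computation $\tfrac1\pi\big(\tan^{-1}(e^{\varepsilon}\varphi')-\tan^{-1}(e^{-\varepsilon}\varphi')\big)\leq\varepsilon/\pi$; the paper makes this structural fact explicit by modelling the two column blocks as independent matrices $\bA',\bA''$ in Lemma~\ref{DClemma}, and there is no near-degenerate case to treat. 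Second, for the supremum over $\varphi$ the paper avoids your grid-plus-range-truncation: any nonempty $L_{\varphi,\varepsilon}$ is contained in $L_{\varphi_k,2\varepsilon}$ for one of the $r$ empirical ratios $\varphi_k=\gamma''(k)/\gamma'(k)$, so it suffices to union bound over these $r$ values (conditioning on $\varphi_k$ to keep the remaining indicators binomial); your $O(\log(r/\delta)/\varepsilon)$-point grid also works, since the extra $\log(1/\varepsilon)=O(\log(r/\delta))$ cost is absorbed in the regime $s\leq r^2$. With the $\varrho=0$ observation supplied, your argument closes and matches the paper's.
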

Theorems \ref{thm:OC_theorem} and \ref{thm:DC_theorem} are proven in appendix~\ref{sec:structure_theorem_section}.


\section{Proof of recovery}\label{guarantee_section}
We first show that our algorithm recovers $\bA$ when the population covariance matrix $\bSigma=\bA\bA^\top$ is known exactly. In this case we can set $\varepsilon=0$, so we benefit from the fact that $h_0(\bA)=1$.
\begin{theorem}\label{thm:correctness}
  Let $\bA$ be such that \OC{h_\varepsilon(\bA)} holds. Then our algorithm halts in $T$ iterations of $\subalgname$ and outputs $\hat\bA$ such that $d(\hat\bA,\bA)$, where $\bbE T\leq\frac65s$, and the expectation is over the randomness of the algorithm.
\end{theorem}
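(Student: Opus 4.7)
The plan decomposes the argument into two parts: a deterministic \emph{conditional correctness} claim for $\subalgname$ when the sampled pair is ``good'', and a \emph{probabilistic termination} estimate. The bookkeeping on the sets $C_{j,i}$ carries the main technical weight. Throughout, since this is the population case ($\varepsilon=0$), all ratios can be compared exactly.

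For the deterministic part, I would suppose the iteration selects a good pair $(i_1,i_2)$, meaning $R_{i_1}\cap R_{i_2}=\{j\}$ for a unique $j$, and show $\subalgname(\bSigma,(i_1,i_2))=\pm\ba_j$ exactly. Writing $\bSigma=\sum_k \ba_k\ba_k^\top+\bD_\sigma$, for every $k\in I_j\setminus(C_{j,i_1}\cup C_{j,i_2})$ the sums defining $\gamma_{i_1}(k)$ and $\gamma_{i_2}(k)$ each collapse to a single nonzero term, so their ratio equals $\varphi^\star:=\ba_j(i_2)/\ba_j(i_1)$. This delivers at least $|I_j|-2m_j$ votes to $\varphi^\star$ in Step 1. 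Any competing $\varphi\ne\varphi^\star$ receives at most $2m_j$ votes from corrupted indices inside $I_j$ and at most $h_0(\bA)$ votes from indices outside $I_j$, by the very definition of $h_0$ applied to $\gamma_{i_\ell}=\bA\rho_{i_\ell}$. Since \OC{h_0(\bA)} yields $|I_j|>4m_j+h_0(\bA)$, the mode is $\varphi^\star$ and $L\supseteq I_j\setminus(C_{j,i_1}\cup C_{j,i_2})$. A parallel count for Step 2 shows that for every row $i$ at least $|L|-2m_j-h_0(\bA)$ indices in $L$ give the correct ratio $\ba_j(i)/\ba_j(i_1)$ (namely those in $L\cap I_j\setminus(C_{j,i}\cup C_{j,i_1})$); combined with $|L|\le|I_j|+h_0(\bA)$, the inequality $|I_j|>6m_j+2h_0(\bA)$ also extracted from \OC{h_0(\bA)} is exactly what makes this a strict majority. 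Hence the median returns $\tilde\ba=\ba_j/\ba_j(i_1)$, and the rescaling by $\Sigma_{i_1i_2}=\ba_j(i_1)\ba_j(i_2)$ gives $\hat\ba=\pm\ba_j$.

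For the outer loop, the sparsity verification accepts such an $\hat\ba$ because $\bSigma(I_j\times I_j)-\ba_j(I_j)\ba_j(I_j)^\top=\sum_{k\ne j}\ba_k(I_j)\ba_k(I_j)^\top+\bD_\sigma(I_j\times I_j)$ loses every entry not shared with a second column, while a spurious output from a bad pair cannot produce such a cancellation and is rejected. After a successful deflation $\bSigma\mapsto\bA_{-j}\bA_{-j}^\top+\bD_\sigma$, and \OC{h_0(\bA)} transfers to \OC{h_0(\bA_{-j})} by monotonicity of $m_{j'}$, $h_0$, and the containments $C_{j',i}(\bA_{-j})\subseteq C_{j',i}(\bA)$ under column deletion. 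For the probabilistic bound, within $I_j\times I_j$ a pair is bad precisely when $i_2\in C_{j,i_1}$, giving at most $|I_j|m_j$ bad pairs against $|I_j|^2$ total; since \OC{h_0(\bA)} forces $m_j<|I_j|/6$, at every stage at least a $5/6$ fraction of the currently-nonzero off-diagonal pairs is good, and a geometric waiting-time argument yields $\mathbb{E}[T]\le 6s/5$. The main obstacle is keeping the counts in Steps 1 and 2 straight under the different corruption sets $C_{j,i_1}$, $C_{j,i_2}$, and $C_{j,i}$; condition \OC{h} is calibrated exactly for this bookkeeping, and the $2|I_j\setminus\essenS_j|$ slack that is inactive when $\varepsilon=0$ is reserved for the noisy analogue of this theorem, where small coordinates of $\ba_j$ render the ratios unreliable.
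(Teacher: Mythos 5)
Your proposal follows essentially the same route as the paper's proof: exact correctness of $\subalgname$ on pairs with $R_{i_1}\cap R_{i_2}=\{j\}$ via the $C_{j,i}$ bookkeeping and condition \OC{h}, preservation of the deflation invariant with rejection of outputs from bad pairs, and the $5/6$ lower bound on the fraction of good off-diagonal entries feeding a waiting-time/submartingale argument for $\bbE T\le\frac65 s$. The one slip is your Step-2 majority count: lower-bounding the correct votes by $|L|-2m_j-h_0(\bA)$ forces you to invoke $|I_j|>6m_j+2h_0(\bA)$, which \OC{h_0(\bA)} does not literally supply; counting the correct set directly as $I_j\setminus(C_{j,i_1}\cup C_{j,i_2}\cup C_{j,i})$, of size at least $|I_j|-3m_j$ out of at most $|I_j|+h_0(\bA)$ entries of $\gamma_i(L)/\gamma_{i_1}(L)$, needs only $|I_j|>6m_j+h_0(\bA)$, which the condition does give.
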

\begin{proof}
  Let
  \begin{equation}\label{Sigma_j}\bSigma_j=I_j\times I_j\setminus\bigcup_{l\neq j}\Big(I_l\times I_l\Big),\end{equation}
  and note that $(i_1,i_2)\in\bSigma_j$ iff $R_{i_1}\cap R_{i_2}=\{j\}$. We first show that for $i_1\neq i_2$, such that $(i_1,i_2)\in \bSigma_j$, it holds that $\subalgname(\bSigma,(i_1,i_2))=\ba_j$.
  
  \paragraph{Correctness of \SUBALGNAME.}
   Let $\varphi=A_{i_2j}/A_{i_1j}$. We then have that
  \begin{equation}\label{eq:inclusion}I_j\backslash (C_{j,i_1}\cup C_{j,i_2})\subset L_{\varphi}(\gamma_{i_1},\gamma_{i_2}).\end{equation}
  Indeed, for all $k\in I_j\backslash(C_{j,i_1}\cup C_{j,i_2})$ it holds that $R_{i_1}\cap R_k=R_{i_2}\cap R_k=\{j\}$, hence $\Sigma_{i_1k}=\ba_j(i_1)\ba_j(k)$ and $\Sigma_{i_2k}=\ba_j(i_2)\ba_j(k)$. It follows that $\gamma_{i_2}(k)/\gamma_{i_1}(k)=\Sigma_{i_2k}/\Sigma_{i_1k}=\ba_j(i_2)/\ba_j(i_1)=\varphi$, i.e. $k\in L_{\varphi}(\gamma_{i_1},\gamma_{i_2})$. This shows that $I_j\backslash(C_{j,i_1}\cup C_{j,i_2})\subset L_{\varphi}(\gamma_{i_1},\gamma_{i_2})$.

  By \eqref{eq:inclusion}, it holds that
  \begin{align*}|L|=&\max_{\tilde\varphi\in\pm e^{\varepsilon\bbZ}}|L_{\tilde\varphi,\varepsilon}(\gamma_{i_1},\gamma_{i_2})|\\\geq& |I_j\backslash(C_{j,i_1}\cup C_{j,i_2})|\geq |I_j|-2m_j.\end{align*}
  Write $\hat\varphi\sim\varphi$ if $e^{-\varepsilon}\leq\hat\varphi/\varphi\leq e^{\varepsilon}$. For every $\tilde\varphi\not\sim\varphi$ we have that $L_{\varphi,\varepsilon}(\gamma_{i_1},\gamma_{i_2})\leq h_\varepsilon(\bA)<|I_j|-2m_j$ by \OC{h_\varepsilon(\bA)}. Hence $\hat\varphi\sim\varphi$, and $L_\varphi(\gamma_{i_1},\gamma_{i_2})\subset L$. Moreover, $|L\backslash I_j|\leq h_\varepsilon(\bA)$.

  Consider the loop over $i\notin\{i_1,i_2\}$. For every $k\in I_j\backslash(C_{j,i_1}\cup C_{j,i_2}\cup C_{j,i})\subset L$ it holds that $\gamma_i(k)/\gamma_{i_1}(k)=\ba_j(i)/\ba_j(i_1)$. There are at least $|I_j\backslash(C_{j,i_1}\cup C_{j,i_2}\cup C_{j,i})|\geq|I_j|-3m_j$ such $k$'s, out of at most $|I_j|+h_\varepsilon(\bA)$ entries of $\gamma_i(L)/\gamma_{i_1}(L)$. Hence, more than half the entries of $\gamma_i(L)/\gamma_{i_1}(L)$ take the value $\ba_j(i)/\ba_j(i_1)$, provided that
  \[|I_j|-3m_j>\frac{|I_j|+h_\varepsilon(\bA)}2,\]
  which is equivalent with
  \[6m_j+h_\varepsilon(\bA)<|I_j|.\]
  This holds by \OC{h_\varepsilon(\bA)}, and we conclude that $\tilde\ba(i)=\operatorname{median}(\gamma_i(L)/\gamma_{i_1}(L))=\ba_j(i)/\ba_j(i_1)$. Now $\Sigma_{i_1i_2}=\ba_j(i_1)\ba_j(i_2)$ and $\tilde\ba_j(i_2)=\ba_j(i_2)/\ba_j(i_1)$, so $\sqrt{\Sigma_{i_1i_2}/\tilde\ba(i_2)}=|\ba_j(i_1)|$ which implies $\hat\ba=\sqrt{\Sigma_{i_1i_2}/\tilde\ba(i_2)}\tilde\ba_j=\sigma\ba_j$ where $\sigma=\sgn\ba_j(i_1)$. Hence $\ba_j$ was recovered correctly.

  \paragraph{Correctness of deflation method.}
 We now show that each column is recovered correctly by applying $\subalgname$ repeatedly.
 We justify below that there is a (random) sequence of sets $J_t\subset\{1,\ldots,r\}$,
 \[\emptyset=J_0\subset J_1\subset J_2\subset\ldots,\]
 such that after the $t$th iteration, $\bSigma^{(t)}=\sum_{j\notin J_t}\ba_j\ba_j^\top$ and $\hat\bA$ is a concatenation (in some order) of the columns $\{\pm\ba_j|j\in J_t\}$. If we ever have $|J_t|=s$, then $\bSigma^{(t)}=0$, so the algorithm halts. We have shown in the paragraph above that $\ba_j$ is correctly recovered if $(i_1,i_2)\in \bSigma_j$. Therefore, $J_{t+1}>J_t$ if $(i_1,i_2)\in\bigcup_{j\notin J_t}\bSigma_j$. Since $(i_1,i_2)$ is chosen uniformly among the at most $\sum_{j\notin J_t}|I_j|^2$ nonzero off-diagonal entries of $\bSigma^{(t)}=\sum_{j\notin J_t}\ba_j\ba_j^\top$, we have for $J\subsetneq \{1,\ldots,r\}$,
 \begin{equation}\label{eq:constantprob}\prob(J_{t+1}\supsetneq J_t|J_t=J)\geq\frac{\sum_{j\notin J}|\bSigma_j|}{\sum_{j\notin J}|I_j|^2}\geq \frac56\end{equation}
 for each $t\in\N$. Here we have used the properties that the $\bSigma_j$ are disjoint, and that $|\bSigma_j|\geq \frac56|I_j|^2$ by \OC{h}. Let $T=\min\{t|\:|J_t|=r\}$, and define
 \[\cS_t=|J_t|-\frac56(t\wedge T).\]
 Note that $\cS_t=s-\frac56T$ for all $t\geq T$. \eqref{eq:constantprob} says that for $t<T$, $J_{t+1}$ is larger than $J_t$ with probability at least $5/6$. This implies that $\cS_t$ is a submartingale. Hence,
 \[0=\cS_0\leq\bbE \cS_T=s-\frac56\bbE T,\]
 which gives us the bound $\bbE T\geq \frac65s$.

It remains to show that at each iteration there is a $J$ such that $\bSigma=\sum_{j\notin J}\ba_j\ba_j^\top$ and $\hat\bA$ is a concatenation (in some order) of the columns $\{\pm\ba_j|j\in J\}$. This holds before the first iteration with $J=\emptyset$, and the property is preserved when a column is correctly recovered. We have shown that this happens whenever $|R_{i_1}\cap R_{i_2}|=1$. If instead $|R_{i_1}\cap R_{i_2}|\geq 2$, then the scaling factor $\sqrt{\Sigma_{i_1i_2}/\tilde\ba(i_2)}$ does not coincide with either $\pm\ba_j(i_1)$ for any $j\in R_{i_1}\cap R_{i_2}$, hence no cancellation occurs when subtracting $\hat\ba\hat\ba^\top$ from $\bSigma$. Therefore $\bSigma$ is not changed in the current iteration, and the property still holds.
\end{proof}
\begin{figure}
  \begin{center}
 \resizebox{0.5\columnwidth}{!}{

\definecolor{maroon}{RGB}{163,31,52}
\definecolor{grey}{RGB}{230,230,230}
\newcommand{\drawamatrix}[3]{\draw [#3](#1,10-#1) rectangle (#2,10-#2);
}
\newcommand{\fillamatrix}[3]{\fill [#3] (#1,10-#1) rectangle (#2,10-#2);
}
\newcommand{\filladiscontinuous}[5]{
\fill [#5] (#1,10-#1) rectangle (#2,10-#2);
\fill [#5] (#3,10-#3) rectangle (#4,10-#4);

\fill [#5] (#1,10-#3) rectangle (#2,10-#4);
\fill [#5] (#3,10-#1) rectangle (#4,10-#2);
}

\begin{tikzpicture}
\begin{scope}[scale=0.7]
\drawamatrix{2}{7}{fill=grey};

\fillamatrix{1.2}{3.2}{color=maroon}
\drawamatrix{2}{7}{};

\filladiscontinuous{0}{1.5}{2.5}{3.5}{opacity=0.28}
\filladiscontinuous{5.7}{6.8}{7.5}{8.5}{color=maroon}
\fillamatrix{6.3}{9}{opacity=0.4}

\draw [ thick] (3.5,10-3)--(7,10-3)node [midway,below]{$I_j\backslash C_{j,i_1}$};
\draw [ thick] (2,10-6.7)--(5.7,10-6.7);

\draw [arrows={-angle 90},dotted] (3.5,10-3)--(3.5,0.5);
\draw [arrows={-angle 90},dotted] (5.7,10-6.7)--(5.7,0.5);

\draw [thick,|-|,dotted,color=maroon] (2,0)--(3.5,0) node [midway,below]{$\leq m_j$};
\draw [thick,|-|,dotted,color=maroon] (5.7,0)--(7,0) node [midway,below]{$\leq m_j$};

\draw [thick,|-|] (3.5,0)--(5.7,0) node [midway,above]{$L$};

\node at (-0.5,10-3){$i_1$};
\node at (-0.5,10-6.7){$i_2$};
\end{scope}
\end{tikzpicture}

}
\end{center}
\caption{Illustration of $\bSigma$. The shaded areas represent the sets $I_k\times I_k$, $k=1,\ldots,s$. The largest square is $I_j\times I_j$. The set $I_j\setminus C_{j,i}$ represents the set of entries in the $i$th row of $\bSigma$ which have a contribution from $\ba_j\ba_j^\top$ and none from $\ba_k\ba_k^\top$ for $k\neq j$. We have that $|L|\geq |I_j\backslash (C_{j,i_1}\cup C_{j,i_2})|\geq |I_j|-2m_j$.}
\end{figure}

We now treat the robustness of algorithm to uniformly small errors, i.e. we consider an input $\overline\bSigma$ with a bound on $|\overline\bSigma-\bSigma|_\infty$. The following lemma and theorem express the robustness of $L_{\varphi,\varepsilon}(\gamma_1(K),\gamma_2(K))$ to perturbations. 

\begin{lemma}\label{lem:noiselemma}
  Let $0<\varepsilon<1$. Let $\gamma_1,\gamma_2\in(\R\setminus[-c,c])^K$ , and let $\overline\gamma_i\in\R^K$, $i=1,2$, be such that $|\overline\gamma_i-\gamma_i|_\infty\leq (1-e^{-\varepsilon/2})c$. Then,
\[L_{\varphi,0}(\gamma_1,\gamma_2)\subseteq L_{\varphi,\varepsilon}(\overline\gamma_1,\overline\gamma_2)\text{.}\]
\end{lemma}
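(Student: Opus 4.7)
The plan is a straightforward multiplicative perturbation bound, exploiting that the coordinates of $\gamma_1,\gamma_2$ are bounded below in absolute value by $c$. Fix an index $k\in L_{\varphi,0}(\gamma_1,\gamma_2)$, so by definition $\gamma_2(k)/\gamma_1(k)=\varphi$. Write the relative errors as $\eta_i=(\overline\gamma_i(k)-\gamma_i(k))/\gamma_i(k)$ for $i=1,2$. Since $|\gamma_i(k)|\geq c$ and $|\overline\gamma_i(k)-\gamma_i(k)|\leq (1-e^{-\varepsilon/2})c$, each $\eta_i$ satisfies $|\eta_i|\leq 1-e^{-\varepsilon/2}$, and hence $1+\eta_i\in[e^{-\varepsilon/2},\,2-e^{-\varepsilon/2}]$; in particular $1+\eta_i>0$, so the signs of $\overline\gamma_i(k)$ and $\gamma_i(k)$ match.

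Next I would express the ratio of the perturbed quantities as
\[
\varphi^{-1}\frac{\overline\gamma_2(k)}{\overline\gamma_1(k)}=\varphi^{-1}\cdot\frac{\gamma_2(k)}{\gamma_1(k)}\cdot\frac{1+\eta_2}{1+\eta_1}=\frac{1+\eta_2}{1+\eta_1},
\]
so the whole problem reduces to showing that $(1+\eta_2)/(1+\eta_1)\in[e^{-\varepsilon},e^\varepsilon)$ whenever both $1+\eta_i\in[e^{-\varepsilon/2},2-e^{-\varepsilon/2}]$. The extremes of this ratio are attained at the endpoints, giving upper bound $(2-e^{-\varepsilon/2})/e^{-\varepsilon/2}=2e^{\varepsilon/2}-1$ and lower bound $e^{-\varepsilon/2}/(2-e^{-\varepsilon/2})$.

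The two scalar inequalities I need are $2e^{\varepsilon/2}-1<e^\varepsilon$ and $e^{-\varepsilon/2}/(2-e^{-\varepsilon/2})\geq e^{-\varepsilon}$. The first rearranges to $(e^{\varepsilon/2}-1)^2>0$, which is strict for $\varepsilon>0$; the second is equivalent, after cross-multiplying by the positive quantity $e^{\varepsilon/2}(2-e^{-\varepsilon/2})$, to $e^{\varepsilon/2}+e^{-\varepsilon/2}\geq 2$, which is just $2\cosh(\varepsilon/2)\geq 2$. Combining the two bounds shows $e^{-\varepsilon}\leq \varphi^{-1}\overline\gamma_2(k)/\overline\gamma_1(k)<e^\varepsilon$, i.e.\ $k\in L_{\varphi,\varepsilon}(\overline\gamma_1,\overline\gamma_2)$, and since $k\in L_{\varphi,0}(\gamma_1,\gamma_2)$ was arbitrary, the inclusion follows.

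I don't expect any real obstacle here; the only subtlety is the asymmetry between the strict upper endpoint and non-strict lower endpoint in the definition of $L_{\varphi,\varepsilon}$, which is precisely matched by the fact that $(e^{\varepsilon/2}-1)^2>0$ is strict while $\cosh(\varepsilon/2)\geq 1$ allows equality. The constant $1-e^{-\varepsilon/2}$ in the hypothesis is exactly what is needed to make both inequalities work simultaneously.
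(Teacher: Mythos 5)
Your proof is correct and follows essentially the same route as the paper's: both reduce to the observation that a relative error of at most $(1-e^{-\varepsilon/2})c$ on a coordinate of magnitude at least $c$ multiplies that coordinate by a factor in $[e^{-\varepsilon/2},\,2-e^{-\varepsilon/2}]$, and both ultimately rest on the inequality $e^{\varepsilon/2}+e^{-\varepsilon/2}\geq 2$. The only difference is cosmetic but in your favor: the paper takes logarithms and applies the triangle inequality, which yields only the non-strict bound $\leq e^{\varepsilon}$ and tacitly assumes the perturbation preserves signs, whereas your direct endpoint computation $2e^{\varepsilon/2}-1<e^{\varepsilon}$ gives the strict upper inequality actually required by Definition~\ref{def:L_def} and makes the sign-preservation step explicit.
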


From lemma \ref{lem:noiselemma} it follows that the columns of $\bA$ are approximately recovered by $\subalgname$.
\begin{theorem}\label{thm:one_column}
  Let $\bA\in\R^{r\times s}$ satisfiy \OC{h_{2\varepsilon}(\bA)}. Let $\overline\bSigma$ be such that $|\overline\bSigma-\bA\bA^\top|_\infty\leq (1-e^{-\varepsilon/4})c$, where $c$ is as in definition \ref{def:L_def}. Let $(i_1,i_2)\in\bSigma_j$, and suppose $|\ba_j(i_1)|\wedge|\ba_j(i_2)|\geq 10c$. Then ${\hat\ba=\subalgname(\overline\bSigma,(i_1,i_2),c,\varepsilon)}$ satisfies
  \[\min_{\sigma=\pm1}|\sigma\hat \ba-\ba_j|_\infty=O(|\bA|_\infty|\ba_j|_\infty\varepsilon).\]
\end{theorem}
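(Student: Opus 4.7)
The proof parallels the population argument of Theorem~\ref{thm:correctness}, but every step must be relaxed to accommodate multiplicative perturbations. The starting point is the elementary fact that $|\ol\bSigma-\bSigma|_\infty \leq (1-e^{-\varepsilon/4})c$ implies $\ol\gamma_{i_l}(k)/\gamma_{i_l}(k) \in [e^{-\varepsilon/4}, e^{\varepsilon/4}]$ whenever $|\gamma_{i_l}(k)| \geq c$ (using the elementary inequality $2-e^{-x}\leq e^x$). Consequently the ratios $\ol\gamma_{i_2}(k)/\ol\gamma_{i_1}(k)$ differ from the true ratios by a factor in $[e^{-\varepsilon/2}, e^{\varepsilon/2}]$.

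First I would analyze Step~1 of \SUBALGNAME, which produces $L$. Put $\varphi_j := \ba_j(i_2)/\ba_j(i_1)$ and define the good set $G := I_j \cap \essenS_j \setminus (C_{j,i_1} \cup C_{j,i_2})$. For $k \in G$, the hypotheses $|\ba_j(i_l)| \geq 10c$ and $|\ba_j(k)| \geq 1/10$ force $|\gamma_{i_l}(k)| \geq c$, so $k \in K$ and $\gamma_{i_2}(k)/\gamma_{i_1}(k) = \varphi_j$ exactly. By Lemma~\ref{lem:noiselemma}, $G \subset L_{\hat\varphi_0, \varepsilon}(\ol\gamma_{i_1}, \ol\gamma_{i_2})$ where $\hat\varphi_0 \in \pm e^{\varepsilon\Z}$ is the nearest grid point to $\varphi_j$. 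Hence the algorithm's argmax $\hat\varphi$ achieves $|L_{\hat\varphi,\varepsilon}(\ol\gamma_{i_1},\ol\gamma_{i_2})| \geq |G| \geq |I_j| - 2m_j - |I_j\setminus\essenS_j|$. Conversely, for any candidate $\hat\varphi$ not close to $\varphi_j$, the multiplicative bound of the previous paragraph gives $L_{\hat\varphi,\varepsilon}(\ol\gamma_{i_1},\ol\gamma_{i_2}) \subset L_{\hat\varphi, O(\varepsilon)}(\gamma_{i_1}, \gamma_{i_2})$, which contains at most $h_{2\varepsilon}(\bA)$ indices outside $I_j$ (by the definition of $h$) and at most $2m_j$ inside $I_j$ (since $k \in I_j\setminus(C_{j,i_1}\cup C_{j,i_2})$ forces the exact ratio $\varphi_j$, ruling it out). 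The overlap condition \OC{h_{2\varepsilon}(\bA)} closes the gap $|G| > 2m_j + h_{2\varepsilon}(\bA)$, so the argmax must satisfy $\hat\varphi \sim \varphi_j$; consequently $L := L_{\hat\varphi, 2\varepsilon}(\ol\gamma_{i_1}, \ol\gamma_{i_2})$ satisfies $G \subset L$ and $|L\setminus I_j| \leq h_{2\varepsilon}(\bA)$.

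Next I would analyze the median loop. For each $i$, let $G_i := G \setminus C_{j,i} \subset L$; for $k \in G_i$, $\ol\gamma_i(k)/\ol\gamma_{i_1}(k)$ lies within a factor $e^{\pm\varepsilon/2}$ of $\ba_j(i)/\ba_j(i_1)$. The overlap condition yields $|G_i| \geq |I_j| - 3m_j - |I_j \setminus \essenS_j| > \tfrac12(|I_j|+h_{2\varepsilon}(\bA)) \geq |L|/2$, so a strict majority of the values $\ol\gamma_i(L)/\ol\gamma_{i_1}(L)$ fall in this narrow band, pinning the median $\tilde\ba(i)$ inside it. Therefore $|\tilde\ba(i) - \ba_j(i)/\ba_j(i_1)| = O(\varepsilon|\ba_j(i)/\ba_j(i_1)|)$. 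Applying this at $i=i_2$ and combining with $\ol\Sigma_{i_1i_2} = \ba_j(i_1)\ba_j(i_2)(1+O(\varepsilon))$ gives $\sqrt{\ol\Sigma_{i_1i_2}/\tilde\ba(i_2)} = |\ba_j(i_1)|(1+O(\varepsilon))$, so with $\sigma = \sgn(\ba_j(i_1))$ we obtain $\hat\ba(i) = \sigma\ba_j(i)(1+O(\varepsilon))$ entrywise, and the per-entry error $O(\varepsilon|\ba_j(i)|)$ yields the stated $\ell_\infty$ bound.

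The principal obstacle is the argmax analysis in Step~1: one must bound $|L_{\hat\varphi,\varepsilon}(\ol\gamma_{i_1},\ol\gamma_{i_2})|$ uniformly over $\hat\varphi$, passing between $\ol\gamma$ and $\gamma$ without losing control of which $\varepsilon$-neighborhood one lives in. This is precisely what the condition \OC{h_{2\varepsilon}(\bA)} is designed to deliver, and it also consumes the factor-of-two gap between $\varepsilon$ (argmax tolerance) and $2\varepsilon$ (final-$L$ tolerance) built into \SUBALGNAME.
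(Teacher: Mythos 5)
Your proposal is correct and follows essentially the same route as the paper's proof: the good set $G$ (the paper's $\essenS_j\setminus(C_{j,i_1}\cup C_{j,i_2})$) survives thresholding and lands in the bin of the nearest grid point via Lemma~\ref{lem:noiselemma}, the counting argument with $h_{2\varepsilon}(\bA)$ and \OC{h_{2\varepsilon}(\bA)} pins $\hat\varphi$ near $\varphi_j$ and controls $|L|$, the strict-majority argument localizes each median, and the rescaling by $\sqrt{\ol\Sigma_{i_1i_2}/\tilde\ba(i_2)}$ is analyzed last. Your split of the bad-$\hat\varphi$ count into at most $h_{2\varepsilon}(\bA)$ indices outside $I_j$ plus at most $2m_j$ inside is, if anything, slightly more explicit bookkeeping than the paper's, but it is the same argument.
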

Lemma \ref{lem:noiselemma} and theorem~\ref{thm:one_column} are proven in appendix \ref{sec:robust}.
\medskip

  \paragraph{Recovery from empirical covariance matrix.}
In the finite sample case we apply theorem \ref{thm:correctness} using an entrywise uniform bound on $\overline\bSigma-\bSigma$, where $\bSigma\sim\cW(\bSigma,n)$ is the empirical covariance matrix.
A standard computation shows that with probability at least $1-\delta$, the bound $|\hat\bSigma-\bSigma|_\infty=O(|\bSigma|_\infty\sqrt{\log(r/\delta)/n})$ holds.
\newtheorem{maintheorem}{Theorem \ref{thm:sampletheorem}}
\renewcommand\themaintheorem{\unskip}
\begin{maintheorem}\mnthm\end{maintheorem}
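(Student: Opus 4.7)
The plan is to combine four ingredients: (i) the single-column robustness result in Theorem \ref{thm:one_column}, (ii) the iteration/deflation bookkeeping from Theorem \ref{thm:correctness}, (iii) the structural high-probability guarantees Theorems \ref{thm:OC_theorem} and \ref{thm:DC_theorem} on $\bA\sim\BG(r,s,\theta)$, and (iv) a concentration bound on $\ol\bSigma-\bSigma$. First I would condition on the event (of probability $\geq 1-\delta/3$) that $\bA$ satisfies both \OC{r\theta/10} and $h_\varepsilon(\bA)\leq r\theta/10$ for $\varepsilon = c_0/(s^2\theta^3+\log(r/\delta))$, as furnished by the two structural theorems under the hypothesis on $\theta$ in \eqref{eq:OC_requirement}.

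Next I would invoke the sub-exponential concentration of entries of a Wishart matrix (each entry of $\ol\bSigma-\bSigma$ is a sum of $n$ i.i.d.\ products of centered Gaussians, giving Bernstein-type tails), together with a union bound over the $r^2$ entries, to obtain that with probability at least $1-\delta/3$,
\[|\ol\bSigma-\bSigma|_\infty \leq C_1|\bSigma|_\infty\sqrt{\log(r/\delta)/n}.\]
With $n=C|\bSigma|_\infty^2(s^4\theta^6+(\log\tfrac r\delta)^2)\log(r/\delta)$ and the choice of $\varepsilon$ above, the right-hand side is bounded by $(1-e^{-\varepsilon/4})c$, which matches the perturbation budget required by Theorem \ref{thm:one_column}, because $(s^2\theta^3+\log(r/\delta))^2 \lesssim s^4\theta^6+(\log(r/\delta))^2$.

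Having secured these events, I would run the deflation argument from the proof of Theorem \ref{thm:correctness} with the noisy single-column guarantee. At iteration $t$, with the residual $\ol\bSigma^{(t)}=\ol\bSigma-\sum_{j\in J_t}\hat\ba_j\hat\ba_j^\top$ and a pair $(i_1,i_2)$ drawn uniformly from its nonzero off-diagonal entries, the pair lies in $\bSigma_j$ for some unrecovered $j$ with probability at least $5/6$ by \OC{h_\varepsilon(\bA)} (as in \eqref{eq:constantprob}), in which case Theorem \ref{thm:one_column} outputs $\hat\ba$ with $\min_{\sigma=\pm1}|\sigma\hat\ba-\ba_j|_\infty=O(|\bA|_\infty|\ba_j|_\infty\varepsilon)$. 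The rejection test comparing the sparsity of $\bSigma(I\times I)$ before and after subtracting $\hat\ba(I)\hat\ba(I)^\top$ discards the failed iterations so only valid $\hat\ba$'s are ever appended and subtracted. The submartingale argument from Theorem \ref{thm:correctness} bounds the expected number of iterations by $\tfrac65 s$. Plugging in $\varepsilon=\Theta(|\bSigma|_\infty\sqrt{\log(r/\delta)/n})$ and using $|\ba_j|_\infty\leq|\bA|_\infty$ gives the advertised error $O(|\bA|_\infty^2|\bSigma|_\infty\sqrt{\log(r/\delta)/n})$. The runtime bound $O(r\cdot r\theta \cdot s)$ follows since each iteration computes medians of $r$ rows of $\ol\bSigma(\cdot\times L)$ with $|L|=O(r\theta)$.

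The principal obstacle I anticipate is controlling the propagation of error across deflations: subtracting $\hat\ba\hat\ba^\top$ instead of $\ba_j\ba_j^\top$ introduces an entrywise perturbation of order $|\bA|_\infty^2\varepsilon$, and a naive accumulation over $s$ iterations would threaten to overrun the budget $(1-e^{-\varepsilon/4})c$. To handle this I would exploit the fact that each $\hat\ba_j\hat\ba_j^\top$ is supported on $I_j\times I_j$, so at any fixed entry only those $j$'s with indices in both arguments contribute; under \OC{r\theta/10} the overlap structure guarantees that only $O(1)$ previously-recovered columns overlap any given pair $(i_1,i_2)$ used subsequently, keeping the accumulated noise at the same order as the single-step perturbation and preserving the hypothesis of Theorem \ref{thm:one_column} throughout the run. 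A final union bound over $O(s)$ iterations collects the remaining $\delta/3$ failure probability and yields Theorem \ref{thm:sampletheorem}.
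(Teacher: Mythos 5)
Your proposal follows essentially the same route as the paper: condition on the structural events from Theorems \ref{thm:OC_theorem} and \ref{thm:DC_theorem}, match the entrywise Wishart concentration $|\ol\bSigma-\bSigma|_\infty\lesssim|\bSigma|_\infty\sqrt{\log(r/\delta)/n}$ to the perturbation budget dictated by the choice of $n$ and $\varepsilon$, and then invoke the deflation analysis of Theorem \ref{thm:correctness} together with the per-column guarantee of Theorem \ref{thm:one_column}, with the same runtime accounting $O(r\cdot r\theta\cdot s)$. The only substantive difference is that you explicitly raise, and sketch a remedy for, the accumulation of error across deflation steps --- a point the paper's proof passes over by directly asserting that the hypotheses of Theorem \ref{thm:correctness} are met --- so your treatment is a refinement of, rather than a departure from, the paper's argument.
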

\begin{proof}
  By theorems \ref{thm:OC_theorem} and \ref{thm:DC_theorem} and a union bound we have that with probability $1-2\delta$, \OC{r\theta/10} holds and $h_\varepsilon(\bA)\leq r\theta/10$. These two properties imply \OC{h_\varepsilon(\bA)}.

  We pick \[n=C\frac{|\bSigma|_\infty^2\log(r/\tilde\delta)}{\varepsilon^2}.\]
  By corollary \ref{thm:covariance_concentration},
  \[|\overline\bSigma-\bSigma|_\infty\leq 6|\bSigma|_\infty\sqrt{\frac{\log(r/\tilde\delta)}{n}}= c\varepsilon\]
for appropriate choice of $C$. The conditions of theorem \ref{thm:correctness} are therefore met, and we get
  \[
    d(\hat\bA,\bA)=O(|\bA|_\infty^2\varepsilon)=O\bigg(|\bA|_\infty^2|\bSigma|_\infty\sqrt{\frac{\log(r/\tilde\delta)}{n}}\bigg).
  \]
  To bound the running time of our algorithm, note that the running time of $\subalgname$ is dominated by the $r$ assignments $\tilde\ba(i)\assign\operatorname{median}(\gamma_{i}(L)/\gamma_{i_1}(L))$, each of which takes time $O(L)$. Hence an application of $\subalgname$ takes time $O(rL)=O(r^2\theta)$. As previously shown, the expected number of applications of $\subalgname$ is bounded by $\frac65s$. So the expected running time is $O(r^2s\theta)$. 
\end{proof}
 The error term in theorem \ref{thm:sampletheorem} depends on $|\bSigma|_\infty=\max_{i=1,\ldots,r}\|\rho_i\|_2^2$, which is of the order $s\theta$, the expected support size of a row of $\bA$. We couple the parameters in the preceding theorem to get the asymptotic result in corollary \ref{cor:simplifiedmain}.
\begin{proof}[Proof of corollary \ref{cor:simplifiedmain}]
  The upper and lower bounds on $\theta$ in the conditions of theorem \ref{thm:sampletheorem} are satisfied for large values of $s$. By corollary \ref{cor:AAbound}, $\prob(|\bA\bA^\top|_\infty>5s\theta)\leq re^{-s\theta/6}=o(1)$. Hence, the choice of $n$ in theorem \ref{thm:sampletheorem} satisfies
  \[\log n\sim \log(s^6\theta^8)\vee\log(s^2\theta^2)\sim ((6-8\alpha)\vee(2-2\alpha))\log s,\]
  where $LHS\sim RHS$ means $LHS/RHS\to1$. 
  We can choose $n$ larger than in theorem \ref{thm:sampletheorem} by setting $\beta=\frac12\log_s n>(3-4\alpha)\vee(1-\alpha)$. Applying theorem \ref{thm:sampletheorem} and the bound $|\bA\bA^\top|_\infty=O(s\theta)$ yields $d(\hat\bA,\bA)=\tilde O(|\bA|_\infty^2s\theta/\sqrt n)=O(|\bA|_\infty^2s^{1-\alpha-\beta})$, where $\tilde O$ hides a logarithmic factor. We conclude using the bound $|\bA|_\infty=O(\sqrt{\log(s/\delta)})$ (It is the maximum absolute value of $<cs^2$ Gaussians) which holds with probability $1-\delta$ and choosing for example $\delta=1/s$. 
  \end{proof}

%


\section{Conclusion and future directions}
We have given the first rigorous treatment of a model of ICA which replaces distributional assunmptions on the sources with structural assumptions on the mixing matrix. We have assumed a sparse mixing matrix, a setting which has previously been explored in the experimental literature. In contrast with previous work in this direction which adds a penalty term to the optimization problem solved by traditional ICA, we have given an entirely different combinatorial algorithm and proven that it recovers a sparse and generic mixing matrix from only the second moments of the observations. In particular, our algorithm works even in the setting of Gaussian sources where other ICA methods fail.
Our method requires the sparsity parameter to be at most $1/\sqrt{s}$ in order to yield a covariance with a constant $c<1$ fraction nonzero entries. However, the fact that specifying an $r$-by-$r$ covariance matrix takes $r(r+1)/2$ parameters suggests that a mixing matrix with a constant fraction nonzero entries may be identifiable from second moments. It remains an open problem to formulate weaker assumptions on the mixing matrix under which it can be estimated from second moments.

\newpage
\appendix
\section{Proofs of structural properties}
\label{sec:structure_theorem_section}

Condition \OC{h} bounds the size of $C_{j,i}$ relative to $|I_j|$. Since $C_{j,i}$ is defined in terms of the submatrix $A_{I_j\setminus\{i\}\times R_i\setminus\{j\}}$, the first step in proving this bound is to condition on $R_i$ and $I_j$ as in the following lemma.

\begin{lemma}\label{submatrixlemma}
Let $\bA\sim \BG(r,s,\theta)$. For a fixed pair $i,j$, conditioned on the column and row supports $I_j=\supp \ba_j$ and $R_i=\supp \rho_i$, it holds with probability at least $1-\delta$ that
\[|C_{j,i}|\leq |I_j|\:|R_i|\theta+\sqrt{6|I_j|\:|R_i|\theta\log(1/\delta)}\]
which implies in particular
\[|C_{j,i}|\leq \frac54|I_j|\:|R_i|\theta+6\log(1/\delta)\]
\end{lemma}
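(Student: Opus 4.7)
The plan is to condition on $I_j$ and $R_i$ and then exhibit $|C_{j,i}|$ as a sum of independent Bernoullis to which a standard concentration inequality applies. Define, for each $i' \in I_j \setminus \{i\}$, the indicator $Y_{i'} = \1[i' \in C_{j,i}] = \1[\exists k \in R_i \setminus \{j\} : B_{i'k} = 1]$, so that $|C_{j,i}| = \sum_{i' \in I_j \setminus \{i\}} Y_{i'}$. Knowing $I_j$ and $R_i$ is equivalent to revealing the Bernoullis $\{B_{i'',j}\}_{i''}$ in column $j$ together with $\{B_{i,k}\}_{k}$ in row $i$. Crucially, each $Y_{i'}$ is a function only of the Bernoullis $\{B_{i'k} : k \in R_i \setminus \{j\}\}$, all of which lie outside column $j$ (since $k \neq j$) and outside row $i$ (since $i' \neq i$). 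Thus the conditioning leaves these Bernoullis i.i.d.\ $\Ber(\theta)$, and since the index sets $\{(i',k) : k \in R_i \setminus \{j\}\}$ are disjoint for distinct $i' \in I_j \setminus \{i\}$, the $Y_{i'}$ are mutually independent Bernoullis with common parameter
\[
p := 1 - (1-\theta)^{|R_i|-1} \le (|R_i|-1)\theta \le |R_i|\theta.
\]

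From this, $\mu := \bbE|C_{j,i}| = (|I_j|-1)p \le |I_j||R_i|\theta$, and the variance is bounded by $\mu$. I would then invoke Bernstein's inequality for sums of independent $\{0,1\}$-valued variables, which gives $\prob(|C_{j,i}| \ge \mu + t) \le \exp\!\bigl(-t^2/(2\mu + 2t/3)\bigr)$. Substituting $t = \sqrt{6\mu \log(1/\delta)}$, a brief calculation shows the tail bound is at most $\delta$ in the regime $\log(1/\delta) \lesssim \mu$; in the complementary Poisson-tail regime the linear term of Bernstein directly yields $t = O(\log(1/\delta))$, which is cleanly absorbed by the second form of the stated bound. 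This produces the first inequality after replacing $\mu$ by its upper bound $|I_j||R_i|\theta$ (the tail is monotone in $\mu$).

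The second inequality is then an immediate consequence of the AM-GM inequality applied to the square-root term: with $a = |I_j||R_i|\theta/4$ and $b = 6\log(1/\delta)$, one has $\sqrt{6|I_j||R_i|\theta\log(1/\delta)} = 2\sqrt{ab} \le a + b = \tfrac{|I_j||R_i|\theta}{4} + 6\log(1/\delta)$, which combined with the first bound yields $|C_{j,i}| \le \tfrac{5}{4}|I_j||R_i|\theta + 6\log(1/\delta)$. The only mildly delicate step is the independence argument; the concentration inequality and the AM-GM step are routine.
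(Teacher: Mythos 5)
Your proposal is correct and follows essentially the same route as the paper: both reduce $|C_{j,i}|$ to a binomial by observing that, after conditioning on $I_j$ and $R_i$, the events $\{i'\in C_{j,i}\}$ for $i'\in I_j\setminus\{i\}$ are independent Bernoullis with parameter $p=1-(1-\theta)^{|R_i\setminus\{j\}|}\le|R_i|\theta$, determined by disjoint blocks of entries of the submatrix $\bA(I_j\setminus\{i\}\times R_i\setminus\{j\})$, and then apply a standard binomial tail bound followed by the AM--GM manipulation to pass to the second form. The only cosmetic difference is that you invoke Bernstein where the paper uses its Okamoto-type inequality (equation \eqref{eq:Okamotologeq} and corollary \ref{cor:weakbound}); your explicit handling of the small-mean regime via the second form mirrors the implicit restriction $\varepsilon\le 2p$ in the paper's theorem \ref{thm:Okamoto}, so nothing is lost.
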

\begin{proof}
  Write $I'=I_j\setminus\{i\}$ and $R'=R_i\setminus \{j\}$, and introduce the submatrix $\bA'=A_{I'\times R'}$ with rows $\rho'_k\in\R^{|R'|}$, $k\in I'$. $C_{j,i}$ is a random subset of $I'$, and the events ${E_{k}=\{k\in C_{j,i}\}=\{\rho'_k\not\equiv 0\}}$, $k\in I'$ are independent with
  \[p=\prob(E_k)=1-(1-\theta)^{|R'|}\leq |R'|\theta\]
The conclusion follows by applying \eqref{eq:Okamotologeq} and corollary \ref{cor:weakbound} \ref{concentration} to ${|C_{j,i}|\sim\Bin(|I'|,p)}$.
\end{proof}

\begin{lemma}\label{lem:overlapbound}
Let $\bA\sim\BG(r,s,\theta)$. There exists a choice of $c,C>0$ such that if
\begin{equation}\label{reqs}
C\frac{\log(r/\delta)}{r}
\le\theta\le
\frac{c}{\sqrt{s}+\log(r/\delta)}
\end{equation}
then $\prob\big(\max_{j}\frac{m_j}{|I_j|}\geq\frac19\big)\leq\delta$.
\end{lemma}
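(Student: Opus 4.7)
The plan is to combine Lemma~\ref{submatrixlemma} with standard binomial concentration for $|I_j|$ and $|R_i|$, then union bound over all $rs$ pairs $(i,j)$. Observe that $|I_j|\sim\Bin(r,\theta)$ and $|R_i|\sim\Bin(s,\theta)$ (since the $B_{ij}$ are i.i.d.~Bernoulli), so the target quantity $m_j/|I_j|$ has a numerator controlled by Lemma~\ref{submatrixlemma} and a denominator controlled by a standard lower-tail Chernoff bound.

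First, I would establish two high-probability events. By Chernoff, with probability at least $1-\delta/3$, simultaneously for all $j\in[s]$, $|I_j|\geq r\theta/2$; this requires $r\theta\gtrsim\log(s/\delta)$, which follows from the lower bound in \eqref{reqs} since it forces $r\theta \geq C\log(r/\delta)$ and $\log s\leq 2\log r + O(1)$ (see below). Similarly, with probability at least $1-\delta/3$, simultaneously for all $i\in[r]$, $|R_i|\leq 2s\theta + O(\log(r/\delta))$, using the upper-tail Chernoff bound for binomials. Second, applying Lemma~\ref{submatrixlemma} with failure parameter $\delta/(3rs)$ per pair and union bounding, I obtain with probability at least $1-\delta/3$:
\[|C_{j,i}|\leq \frac{5}{4}|I_j|\,|R_i|\,\theta + 6\log(rs/\delta)\qquad\text{for all }i,j.\]

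On the intersection of these three events, dividing by $|I_j|$ and maximising over $i$:
\[\frac{m_j}{|I_j|}\leq \frac{5}{4}\bigl(2s\theta^2 + O(\theta\log(r/\delta))\bigr) + \frac{12\log(rs/\delta)}{r\theta}.\]
I then bound each term using \eqref{reqs}. The upper bound $\theta\leq c/(\sqrt{s}+\log(r/\delta))$ gives $s\theta^2\leq c^2$ and $\theta\log(r/\delta)\leq c$, so the first term is at most a small constant times $c$. The lower bound $\theta\geq C\log(r/\delta)/r$ gives $\log(r/\delta)/(r\theta)\leq 1/C$, so the second term is at most a small constant times $1/C$. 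Choosing $c$ small and $C$ large, the sum is below $1/9$.

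The main subtlety is reconciling the $\log(rs/\delta)$ from the union bound with the $\log(r/\delta)$ appearing in the hypothesis \eqref{reqs}. To handle this I would note that combining the two sides of \eqref{reqs} yields $C\log(r/\delta)/r\leq c/\sqrt{s}$, hence $\sqrt{s}\leq (c/C)\cdot r/\log(r/\delta)$, which implies $\log s = O(\log r)$ and therefore $\log(rs/\delta)=O(\log(r/\delta))$. With this observation, every appearance of $\log(rs/\delta)$ in the estimate above can be absorbed into $\log(r/\delta)$ at the cost of adjusting the constants $c,C$, and the final union bound of the three events gives the claimed probability $1-\delta$.
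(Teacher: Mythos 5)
Your proposal is correct and follows essentially the same route as the paper's proof: lower-bound $|I_j|$ by $r\theta/2$ and upper-bound $|R_i|$ by roughly $s\theta+\log(r/\delta)$ via binomial concentration, apply Lemma~\ref{submatrixlemma} conditionally with failure probability $\delta/(rs)$ per pair, union bound, and then control the resulting terms $s\theta^2$, $\theta\log(r/\delta)$, and $\log(rs/\delta)/(r\theta)$ using the two sides of \eqref{reqs} together with the observation that \eqref{reqs} forces $s\le r^2$ so that $\log(rs/\delta)=O(\log(r/\delta))$. The differences are only in constants and the bookkeeping of the failure probabilities.
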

\begin{proof}
${|I_j|\sim\Bin(r,\theta)}$, so corollary \ref{cor:weakbound} of section \ref{concentration} implies that ${\prob(\min_j|I_j|< \frac12r\theta)\leq\delta/s}$ when $\log(s/\delta)\leq r\theta/8$. The latter is satisfied because \eqref{reqs} implies $s\leq r^2$, hence $r\theta\geq C\log(r/\delta)\geq \frac{C}2\log(s/\delta)$.
Let $\cI_j$ be the event $|I_j|\geq \frac12r\theta$ and let $\cR_i$ be the event that ${|R_i|\leq w:= \frac54s\theta+6\log(r/\delta)}$. Then $\prob(\cI_j^c)\leq\delta/s$, and $\prob(\cR_i^c)\leq \delta/r$ by corollary \ref{cor:weakbound}. Let $\cE_{j,i}$ be the event that
\begin{equation}{|C_{j,i}|}\leq \frac54w\theta|I_j|+{\frac{6|I_j|}{r\theta/2}\log(rs/\delta)}\label{thisthing}\end{equation}
Then by lemma \ref{submatrixlemma},
$\prob(\cE_{j,i}^c|\cR_i,\cI_j)\leq \frac{\delta}{rs}$.
Now we bound
\begin{align*}
\prob\Big(\bigcup_{i,j}\cE_{j,i}^c\Big)
=&
\prob\Big(\bigcup_{i,j}(\cE_{j,i}^c\cap\cR_i\cap\cI_j)\Big)
+\prob\Big(\bigcup_{i,j}(\cE_{j,i}^c\cap(\cR_i\cap\cI_j)^c)\Big)
\\\leq&
\sum_{i,j}\prob(\cE_{j,i}^c|\cR_i\cap\cI_j)
+\prob\Big(\big(\bigcup_{i}\cR_i^c\big)\cup\big(\bigcup_{j}\cI_j^c\big)\Big)
\\\leq&
3\delta
\end{align*}
On event $\cE_{j,i}$ we have
\begin{align}\label{fourterms}\frac{|C_{j,i}|}{|I_j|}&\leq \frac54w\theta+\frac{12\log(rs/\delta)}{r\theta}
\\&=\frac{25}{16}s\theta^2+\frac{15}2\theta\log\frac r\delta+\frac{12\log(rs/\delta)}{r\theta}\le\frac19\end{align}
hence on event $\bigcap_{i,j}\cE_{j,i}$, $\bA$ satisfies $\max_i|C_{j,i}|=m_j\leq \frac19|I_j|$ for all $j=1,\ldots,s$. Here the bounds on $s\theta^2$ and $\log(r/\delta)\theta$ follow from the upper bound on $\theta$ in \eqref{reqs}. The bound on $\frac{\log(rs/\delta)}{r\theta}$ follows from $r\theta\geq C\log(r/\delta)\geq\frac{C}2\log(s/\delta)$, which we have used above.\end{proof}
As a corollary we get:
\newtheorem{OCthm}{Theorem \ref{thm:OC_theorem}}
\renewcommand\theOCthm{\unskip}
\begin{OCthm}Let $\bA\sim\BG(r,s,\theta)$. There exists a choice of constants $C,c>0$ such that if
\begin{equation}\label{eq:OC_requirement}
C\frac{\log(r/\delta)}{r}
\le \theta \le 
\frac{c}{\sqrt{s}+\log(r/\delta)}\,,
\end{equation}
then $\bA$ satisfies condition \OC{\frac{r\theta}{10}} with probability at least $1-\delta$, 
  \end{OCthm}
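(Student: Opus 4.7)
The plan is to unpack condition \OC{r\theta/10}, which requires $6m_j + 2|I_j\setminus \essenS_j| + r\theta/10 < |I_j|$ for every $j\in[s]$. I will bound each of the three terms on the left by a (numerically explicit) small constant fraction of $|I_j|$ and verify that the fractions sum to strictly less than one. Three ingredients are needed: a two-sided concentration bound on $|I_j|$, the overlap bound from Lemma \ref{lem:overlapbound}, and a new upper bound on the number of ``small'' Gaussian entries in column $\ba_j$.

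First I would deal with the size of $I_j$. Since $|I_j|\sim\Bin(r,\theta)$, Corollary \ref{cor:weakbound} gives, after a union bound over $j\in[s]$, $|I_j|\in[(1-\eta)r\theta,(1+\eta)r\theta]$ uniformly in $j$ with probability $\geq 1-\delta/3$; the lower bound $\theta\geq C\log(r/\delta)/r$ in \eqref{eq:OC_requirement} guarantees that $r\theta$ is large enough (larger than $C'\log(s/\delta)/\eta^2$) for sharp concentration. This immediately gives $r\theta/10\leq |I_j|/(10(1-\eta))$, which for small $\eta$ is close to $|I_j|/10$.

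Next I would invoke Lemma \ref{lem:overlapbound} to control $m_j$. As stated the lemma produces $m_j\leq |I_j|/9$, but inspection of the proof shows that each of the three contributions to \eqref{fourterms} (namely $s\theta^2$, $\theta\log(r/\delta)$, and $\log(rs/\delta)/(r\theta)$) is bounded by a constant multiple of the tunable parameters $c$ and $1/C$ appearing in \eqref{eq:OC_requirement}. Shrinking $c$ and enlarging $C$ therefore replaces the constant $1/9$ by any desired $c_1>0$; in particular we can arrange $m_j\leq |I_j|/10$ uniformly with probability $\geq 1-\delta/3$.

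The new work is the bound on $|I_j\setminus \essenS_j|$. By the Bernoulli–Gaussian construction, $i\in I_j\setminus \essenS_j$ iff $B_{ij}=1$ and $|\xi_{ij}|<\tfrac1{10}$, and these two conditions are independent. Conditioning on $I_j$, each $i\in I_j$ independently lies in $I_j\setminus \essenS_j$ with probability $p=\prob(|\cN(0,1)|<\tfrac1{10})<0.08$, so $|I_j\setminus \essenS_j|\mid I_j\sim \Bin(|I_j|,p)$. A Chernoff bound of the type used in Corollary \ref{cor:weakbound}, combined with a union bound over $j\in[s]$ and the already-established lower bound $|I_j|\geq(1-\eta)r\theta\gtrsim \log(s/\delta)$, yields $|I_j\setminus \essenS_j|\leq 0.1\,|I_j|$ for all $j$ with probability $\geq 1-\delta/3$.

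Assembling the three high-probability events via a final union bound and plugging into the definition of \OC{h} gives
\[
6m_j+2|I_j\setminus \essenS_j|+\frac{r\theta}{10}\;\leq\;\Big(\tfrac{6}{10}+\tfrac{2}{10}+\tfrac{1}{10(1-\eta)}\Big)|I_j|,
\]
which is strictly less than $|I_j|$ for any sufficiently small $\eta$. There is no substantive obstacle—the work is essentially bookkeeping and a Chernoff bound—but the main subtlety is that the constants must be chosen carefully: Lemma \ref{lem:overlapbound} in its stated form is borderline, so the proof depends on extracting a slightly sharper version of that lemma (available simply by shrinking the constant $c$ in \eqref{eq:OC_requirement}), and on using the two-sided concentration of $|I_j|$ to avoid the naive estimate $r\theta/10\leq |I_j|/5$.
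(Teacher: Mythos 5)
Your proof follows essentially the same route as the paper's: lower-bound $|I_j|$ by a constant multiple of $r\theta$ via binomial concentration, treat $|I_j\setminus\essenS_j|$ as a binomial with success probability $\prob(|\cN(0,1)|<\tfrac1{10})\approx 0.08$ per element, invoke Lemma \ref{lem:overlapbound} for $m_j$, and sum the three resulting fractions of $|I_j|$. Your extra care with the constants---sharpening the $1/9$ of Lemma \ref{lem:overlapbound} to $1/10$ by shrinking $c$ and enlarging $C$---is in fact warranted, since the paper's own tally $\tfrac23+\tfrac29+\tfrac19$ sums to exactly $1$ rather than strictly less than $1$ as the condition requires.
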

\begin{proof}
  By theorem \ref{thm:Okamoto}, $|I_j|\sim\Bin(r,\theta)$ satisfies that
  $\prob(|I_j|<\frac9{10}r\theta)\leq\exp(-cr\theta)$.
  Furthermore, $|I_j\setminus\essenS_j|\sim\Bin(s,\tilde\varepsilon\theta)$ where
  \[\tilde\varepsilon=\frac1{\sqrt{2\pi}}\int_{-1/10}^{1/10}e^{-t^2/2}dt\leq\frac1{5\sqrt{2\pi}}\approx0.08.\]
  By theorem \ref{thm:Okamoto} again, this and the fact that $\tilde\varepsilon<1/9$ imply that $\prob(|I_j\setminus \overline I_j|> r\theta/9)\leq\exp(-cr\theta)$. By a union bound it holds that with probability $1-O(s\exp(-cr\theta))\geq1-\delta/2$, $|I_j|<\frac9{10}r\theta$ and $|I_j\setminus\overline I_j|<r\theta/9$ for all $j$ simultaneusly. Combine with the bound $m_j\leq|I_j|/9$ for all $j$ from lemma \ref{lem:overlapbound} and we have
\[6m_j+2|I_j\setminus\overline I_j|+\frac{r\theta}{10}\leq\Big(\frac23+\frac29+\frac19\Big)|I_j|,\]
i.e., \OC{r\theta/10} holds.
  \end{proof}

\paragraph{Bound on $h_\varepsilon(\bA)$.}
\label{DC_subsection}
The quantity $h_\varepsilon(\bA)$ is a uniform bound on $L_{\varphi,\varepsilon}(\gamma_{i_1},\gamma_{i_2})$ for all $i_1,i_2$ such that $|R_{i_1}\cap R_{i_2}|=1$. To prove such a bound we condition on $\rho_{i_1}$ and $\rho_{i_2}$ and consider $k\notin I_j$. Then $\gamma_{i_1}(k)=\rho_{i_1}\cdot\rho_k$ depends on $\rho_k(R_{i_1}\setminus\{j\})$, and $\gamma_{i_2}(k)$ depends on $\rho_k(R_{i_2}\setminus\{j\})$. Considering all $k\in\{1,\ldots,r\}\setminus\{i_1,i_2\}$ together, $\gamma_{i_1}$ depends on the columns of $\bA$ indexed by $R_{i_1}\setminus\{j\}$ and $\gamma_{i_2}$ on the columns indexed by $R_{i_2}\setminus\{j\}$. Since $R_1$ and $R_2$ are disjoint, we can view the two submatrices $\bA(\cdot\times (R_{i_1}\setminus j))$ and $\bA(\cdot\times (R_{i_2}\setminus j))$ as independent random matrices $\bA'$ and $\bA''$. In the following lemma, think of $w$ as $|R_{i_1}|\vee|R_{i_2}|$ and of $\rho^{(i)}$ as $\rho_{i}(R_{i}\setminus j)$ for $i=i_1,i_2$.
\begin{lemma}\label{DClemma}
Fix $\rho^{(i)}\in\R^w$ for $i=1,2$. Let $\bA',\bA''\sim\BG(r,w,\theta)$ be independent, and write $\gamma'=\bA'\rho^{(1)}$ and $\gamma''=\bA''\rho^{(2)}$.
With probability $1-\delta$, all $\varphi\in\R\setminus\{0\}$ satisfy that
\[|L_{\varphi,\varepsilon}(\gamma',\gamma'')|\leq rw^2\theta^2\varepsilon+6\log\frac{r}{\delta}\]
\end{lemma}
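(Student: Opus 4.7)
The plan is to reduce the uncountable supremum over $\varphi$ to a maximum over at most $r$ random witness values and then apply a Chernoff bound together with a union bound. Define $Z_k = \gamma''(k)/\gamma'(k)$ whenever $\gamma'(k) \neq 0$. The key reduction is a multiplicative triangle inequality: if $k_0 \in L_{\varphi,\varepsilon}(\gamma',\gamma'')$, then for any other $k \in L_{\varphi,\varepsilon}$ we have $Z_k/Z_{k_0} \in [e^{-2\varepsilon},e^{2\varepsilon})$, so $L_{\varphi,\varepsilon} \subseteq L_{Z_{k_0},2\varepsilon}$. Consequently
\[\sup_{\varphi \neq 0}|L_{\varphi,\varepsilon}(\gamma',\gamma'')| \;\leq\; \max_{k_0 \in [r]}|L_{Z_{k_0},2\varepsilon}(\gamma',\gamma'')|,\]
and it suffices to bound the right-hand side by a union bound over $r$ events, at the cost of widening the window from $\varepsilon$ to $2\varepsilon$.

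Next I would analyze the pointwise distribution of $Z_k$. Since $\bA'$ and $\bA''$ have independent rows and are independent of each other, the pairs $(\gamma'(k),\gamma''(k))$ are mutually independent across $k$. Conditioning on the Bernoulli supports of row $k$ of each matrix, either one of $\gamma'(k),\gamma''(k)$ vanishes (the support is disjoint from $\supp\rho^{(i)}$), or $\gamma'(k)$ and $\gamma''(k)$ are independent centered Gaussians, in which case $Z_k$ is a scaled Cauchy variable. A direct calculation on the Cauchy density shows that $\log|Z_k|$ has density uniformly bounded by $1/\pi$, \emph{independently of the scale}, and by symmetry each sign of $Z_k$ has probability $1/2$. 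Meanwhile, a union bound on supports gives $\Pr(\chi_k) \leq w^2\theta^2$ where $\chi_k := \{\gamma'(k)\gamma''(k) \neq 0\}$. Combining, for any fixed $u \in \R$ and sign $\varsigma$,
\[\Pr\bigl(\log|Z_k|\in[u-2\varepsilon,u+2\varepsilon),\;\sgn Z_k=\varsigma,\;\chi_k\bigr) \;\leq\; \tfrac{2w^2\theta^2\varepsilon}{\pi}.\]

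Now fix $k_0$ and condition on $Z_{k_0}$; by independence across $k$, the indicators $\1\{k\in L_{Z_{k_0},2\varepsilon}\}$ for $k\neq k_0$ are conditionally independent Bernoullis whose means sum to at most $\mu_0 := 2rw^2\theta^2\varepsilon/\pi$. The Chernoff-type bound of Corollary \ref{cor:weakbound} (used as in Lemma \ref{submatrixlemma}) gives, with conditional probability at least $1-\delta/r$,
\[|L_{Z_{k_0},2\varepsilon}| \;\leq\; \tfrac54\mu_0 + 6\log(r/\delta) \;\leq\; rw^2\theta^2\varepsilon + 6\log(r/\delta),\]
using $5/(2\pi) < 1$. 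A union bound over $k_0 \in [r]$ together with the reduction above yields the lemma.

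The main obstacle is the first step---passing from a supremum over a continuum of $\varphi$ to a finite maximum---solved by the witness argument, whose only cost is doubling the window width (which is absorbable since $5/(2\pi) < 1$). The other subtle ingredient is the scale-free density bound on $\log|Z_k|$: it is what makes the per-coordinate probability scale linearly in the bin width $\varepsilon$, and without it one would not obtain the correct $\varepsilon$-dependence in the bound.
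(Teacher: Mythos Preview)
Your proof is correct and follows essentially the same route as the paper: the paper likewise reduces the supremum over $\varphi$ to the $r$ witness values $\varphi_k=\gamma''(k)/\gamma'(k)$ (paying the same factor of~$2$ in the window width), bounds the per-coordinate probability via the rotational invariance of the 2D Gaussian---equivalent to your log-Cauchy density bound of $1/\pi$---and finishes with Corollary~\ref{cor:weakbound} plus a union bound over $k$. The only cosmetic difference is that the paper computes $\Pr(e^{-\varepsilon}\le\varphi^{-1}y/x\le e^{\varepsilon})=\tfrac1\pi[\tan^{-1}\circ\exp]_{\log\varphi'-\varepsilon}^{\log\varphi'+\varepsilon}\le\varepsilon/\pi$ directly, which is exactly your density statement in arctangent form.
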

\begin{proof}
  Write $\bA'=(\rho'_1,\ldots,\rho'_r)^\top$ and $\bA'=(\rho''_1,\ldots,\rho''_r)^\top$. For $k=1,\ldots,r$, write $R'_k=\supp\rho'_k$ and $R''_k=\supp\rho''_k$. We proceed to estimate
\[{p_\varepsilon=\prob(k\in L_{\varphi,\varepsilon}(\gamma',\gamma''))=\prob\Big(e^{-\varepsilon}\leq\varphi^{-1}\frac{\gamma''(k)}{\gamma'(k)}\leq e^{\varepsilon}\Big)}\]
Conditioned on $R'_k$ and $R''_k$, it holds that ${(\gamma'(k),\gamma''(k))\sim\cN(0,\sigma_{R'_k}^2\oplus\sigma_{R''_k}^2)}$, where $\sigma_{R'}^2=\sum_{j\in R'}{\rho_j}^2$ and
\[\sigma_{R_k'}^2\oplus\sigma_{R_k''}^2=\begin{pmatrix}\sigma_{ R_k'}^2&0\\0&\sigma_{ R_k''}^2\end{pmatrix}\]
For $\sigma_{R_k'},\sigma_{ R_k''}\neq0$ we write
\[(x,y)=\Big(\frac{\gamma'(k)}{\sigma_{ R_k'}},\frac{\gamma''(k)}{\sigma_{ R_k''}}\Big)\]
and $\tilde\varphi=\varphi{\sigma_{ R_k'}}/{\sigma_{ R''_k}}$. Then $(x,y)\sim\cN(0,\bI_2)$ conditioned on $ R_k', R_k''$, and it holds that
\begin{align*}
&\prob\Big(e^{-\varepsilon}\leq\varphi^{-1}\frac{\gamma'(k)}{\gamma(k)}\leq e^{\varepsilon}\Big|R_k', R_k''\Big)
\\=&
\prob\Big(e^{-\varepsilon}\leq\tilde\varphi^{-1}\frac yx\leq e^{\varepsilon}\Big|R_k', R_k''\Big)
  \\=&
\frac1\pi\big(\tan^{-1}(e^\varepsilon\varphi')-\tan^{-1}(e^{-\varepsilon}\varphi')\big)
\\=&
  \frac1\pi\big[\tan^{-1}\circ\exp\big]_{(\log\varphi')-\varepsilon}^{(\log\varphi')+\varepsilon}
  \\\leq&\frac\varepsilon\pi
  \label{angle}\end{align*}
Here we have used that the distribution of $(x,y)$ is rotationally invariant, and that
\[\frac{d}{dt}\tan^{-1}\circ\exp(t)=\frac{1}{e^{-1}+e^t}\leq\frac12\]
The law of total probability yields
\begin{equation}\label{p_bound}p_\varepsilon=\prob\Big(e^{-\varepsilon}\varphi\leq\frac{\gamma''(k)}{\gamma'(k)}\leq e^{\varepsilon}\varphi\Big)\leq\frac\varepsilon\pi\prob( R'_k, R''_k\neq\emptyset)\leq \frac\varepsilon\pi(w\theta)^2\end{equation}
For any $\varphi\neq0$, pick $k\in L_{\varphi,\varepsilon}(\gamma',\gamma'')$ and write $\varphi_k=\gamma''_k/\gamma'_k$. Then ${L_{\varphi,\varepsilon}(\gamma',\gamma'')\subset L_{\varphi_k,2\varepsilon}(\gamma',\gamma'')}$. In particular $|L_{\varphi,\varepsilon}(\gamma',\gamma'')|\leq\max_{k=1\ldots r}|L_{\varphi_k,2\varepsilon}(\gamma',\gamma'')|$, which implies that for any $t$,
\begin{equation}\label{maxbound}\prob\Big(\exists\varphi:|L_{\varphi,\varepsilon}(\gamma',\gamma'')|\geq t\Big)
  \leq\prob\Big(\max_{k=1\ldots r}|L_{\varphi_k,2\varepsilon}(\gamma',\gamma'')|\geq t\Big)\end{equation} 
Conditioning on the value of $\varphi_k$, we have that ${|L_{\varphi_k,2\varepsilon}(\gamma',\gamma'')|-1\sim Bin(r-1,p_{2\varepsilon})}$. Then by corollary \ref{cor:weakbound} we have,
\[\prob\big(|L_{\varphi_k,2\varepsilon}(\gamma',\gamma'')|\geq t\:\big|\:\varphi_k\big)\leq \frac\delta{r}\hspace{1cm}t=\frac54rp_{2\varepsilon}+6\log\frac r\delta\]
Apply a union bound over $k=1,\ldots,r$ and insert into \eqref{maxbound} to get that ${\prob (\exists\varphi:|L_{\varphi,\varepsilon}(\gamma',\gamma'')|\geq t)\leq\delta}$. The result follows by inserting \eqref{p_bound} to get ${t=\frac5{2\pi} r\varepsilon w^2\theta^2+6\log\frac{2}\delta}$.
\end{proof}
We let $w$ be on the scale $r\theta$ in lemma \ref{DClemma} to get:
\newtheorem{DCthm}{Theorem \ref{thm:DC_theorem}}
\renewcommand\theDCthm{\unskip}
\begin{DCthm}
Let $\bA\sim\BG(r,s,\theta)$. There exists a choice of constants $C,c>0$ such that if \eqref{eq:OC_requirement} holds, then $h_\varepsilon(\bA)\leq\frac{r\theta}{10}$ with probability $1-\delta$, where
\begin{equation}\label{epsilonchoice}{\varepsilon=\frac{1}{s^2\theta^3+\log(r/\delta)}}.\end{equation}
\end{DCthm}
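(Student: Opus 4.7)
The plan is to reduce the bound on $h_\varepsilon(\bA)$ to Lemma \ref{DClemma} by conditioning on row supports, then union bounding over triples $(j,i_1,i_2)$ with $i_1\neq i_2$ and $i_1,i_2\in I_j$. As a preliminary step, I would use Corollary \ref{cor:weakbound} to show that, with probability at least $1-\delta/3$, every row support satisfies $|R_i|\leq w$ where $w=\frac{5}{4}s\theta+6\log(rs/\delta)$; under the hypotheses \eqref{eq:OC_requirement} this gives $w=O(s\theta+\log(r/\delta))$. I then work conditionally on this high-probability event.

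Next, fix a triple $(j,i_1,i_2)$ in the typical regime $R_{i_1}\cap R_{i_2}=\{j\}$. Because $k\notin I_j$ forces $A_{k,j}=0$, after conditioning on $\rho_{i_1}$ the randomness of $\gamma_{i_1}(k)=\sum_{l\in R_{i_1}}A_{i_1,l}A_{k,l}$ sits in $A_{k,l}$ for $l\in R_{i_1}\setminus\{j\}$, and analogously for $\gamma_{i_2}(k)$. These two index sets are disjoint, so conditionally on $\rho_{i_1},\rho_{i_2}$ the submatrices $\bA(\cdot\times R_{i_1}\setminus\{j\})$ and $\bA(\cdot\times R_{i_2}\setminus\{j\})$ play exactly the role of the independent matrices $\bA',\bA''$ in Lemma \ref{DClemma}. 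Applying that lemma with width $w$ and failure probability $\delta'=\delta/(3sr^2)$, then union bounding over the at most $sr^2$ triples, yields
\[\sup_\varphi|L_{\varphi,\varepsilon}(\bA\rho_{i_1},\bA\rho_{i_2})\setminus I_j|\leq rw^2\theta^2\varepsilon+6\log\tfrac{3sr^2}{\delta}\]
simultaneously, on an event of probability at least $1-\delta/3$.

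For the atypical case $|R_{i_1}\cap R_{i_2}|\geq 2$, write $J=R_{i_1}\cap R_{i_2}$. For any $k\notin\bigcup_{l\in J\setminus\{j\}}I_l$ the shared columns still contribute zero to both $\gamma_{i_1}(k)$ and $\gamma_{i_2}(k)$, so the independent-submatrix argument above applies verbatim to those rows. The remaining exceptional indices lie in a set of size at most $(|J|-1)\cdot w$, which is negligible because $|J|-1$ concentrates around its small mean $s\theta^2\ll 1$ by the upper bound in \eqref{eq:OC_requirement}; this contribution is absorbed into the $r\theta/10$ budget, and the rare event that $|J|-1$ is large can be thrown into the remaining $\delta/3$ failure probability via Corollary \ref{cor:weakbound}.

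Finally, substituting $\varepsilon=c/(s^2\theta^3+\log(r/\delta))$ and $w=O(s\theta+\log(r/\delta))$ shows $rw^2\theta^2\varepsilon\leq c'r\theta$ (the dominant $(s\theta)^2$ term of $w^2$ cancels against the $s^2\theta^3$ in the denominator), while the logarithmic term $\log(sr^2/\delta)$ is at most $c''r\theta$ thanks to the lower bound $r\theta\geq C\log(r/\delta)$ from \eqref{eq:OC_requirement}. Choosing $c$ small enough makes both pieces at most $r\theta/20$, yielding $h_\varepsilon(\bA)\leq r\theta/10$. The main obstacle is the atypical case $|R_{i_1}\cap R_{i_2}|\geq 2$: there the dependence between $\gamma_{i_1}$ and $\gamma_{i_2}$ through shared matrix columns blocks a direct invocation of Lemma \ref{DClemma}, and the decomposition above is what recovers the argument, at the price of some bookkeeping of shared-column contributions. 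Everything else is routine concentration together with substitution.
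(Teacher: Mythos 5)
Your treatment of the main case $R_{i_1}\cap R_{i_2}=\{j\}$ is essentially the paper's proof: condition on $\rho_{i_1},\rho_{i_2}$, observe that for $k\notin I_j$ the entries $\gamma_{i_1}(k)$ and $\gamma_{i_2}(k)$ depend on the disjoint column blocks $R_{i_1}\setminus\{j\}$ and $R_{i_2}\setminus\{j\}$ so that Lemma \ref{DClemma} applies with independent $\bA',\bA''$ of width $w=O(s\theta+\log(r/\delta))$, then union bound and substitute $\varepsilon$. Your quantitative bookkeeping (the $(s\theta)^2$ part of $w^2$ cancelling against $s^2\theta^3$, the $\log^2$ term controlled by $\theta\log(r/\delta)\le c$, and the additive logarithm controlled by $r\theta\ge C\log(r/\delta)$) matches the paper's; the union over $sr^2$ triples rather than $r^2$ pairs only changes constants since $s\le r^2$ under \eqref{eq:OC_requirement}.

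Where you go beyond the paper is the atypical case $|J|=|R_{i_1}\cap R_{i_2}|\geq 2$, and that step does not work as written. The exceptional set you must excise is $\bigcup_{l\in J\setminus\{j\}}I_l$, a union of \emph{column} supports each of size concentrating around $r\theta$ --- not $(|J|-1)\cdot w$ with $w\approx s\theta$ a row-support bound. Even for $|J|=2$ this is of order $r\theta$, which cannot be absorbed into the $r\theta/10$ budget. The appeal to $\bbE(|J|-1)\approx s\theta^2\ll1$ is also vacuous at that point: having conditioned on the atypical event, $|J|-1\geq1$ deterministically. The paper sidesteps this entirely by defining its events $E_{i_1,i_2}(h)$ only for pairs with $|R_{i_1}\cap R_{i_2}|=1$, which suffices because theorems \ref{thm:correctness} and \ref{thm:one_column} only ever invoke $h_\varepsilon(\bA)$ for $(i_1,i_2)\in\bSigma_j$. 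So your proof of the theorem for single-overlap pairs is correct and is the paper's argument; your attempted extension to the literal reading of Definition \ref{def:h_definition} has a genuine gap, one the paper avoids by restriction rather than by proof.
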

\begin{proof}  
  Let $E_{i_1,i_2}(h)$ be the event that $|R_{i_1}\cap R_{i_2}|=1$ and
  \[{\sup_{\varphi\in\R}|L_{\varphi,\varepsilon}(\gamma_{i_1},\gamma_{i_2})\setminus I_j|\geq h}\]
  where $\{j\}=R_{i_1}\cap R_{i_2}$. Then we need to show that $\prob(\bigcup_{i_1\neq i_2} E_{i_1,i_2}(r\theta/10))\leq \delta$.
  Let $w=s\theta+\sqrt{6s\theta\log(r/\delta)}$, and define the events
  \[\Omega_{i_1,i_2}=\{|R_{i_1}|\vee|R_{i_2}|\leq w\}\spc\Omega=\{|R_i|\leq w\:\forall i=1,\ldots,r\}\]
$|R_i|\sim\Bin(s,\theta)$, so $\prob(\Omega)\geq1-\delta$ by \eqref{eq:Okamotologeq} following corollary \ref{thm:Okamoto}.
Fix a pair $i_1\neq i_2$ and condition on $\rho_{i_1},\rho_{i_2}$ such that ${|R_{i_1}\cap R_{i_2}|=\{j\}}$ and ${|R_{i_1}|,|R_{i_2}|\leq w}$. Pick disjoint subsets $R',R''$ with $|R'|,|R''|\leq w$ such that $R_{i_1}\setminus\{j\}\subset R'$ and $R_{i_2}\setminus\{j\}\subset R''$.
Define the restricted vectors $\rho=(\rho_{i_1})_{R'}$ and $\rho'=(\rho_{i_2})_{R''}$, write
  \[\bA'=\bA(\{i_1,i_2\}^c\times R')\hspace{1cm}\bA''=\bA(\{i_1,i_2\}^c\times R''),\]
  where $\{i_1,i_2\}^c$ denotes the complement, and apply lemma \ref{DClemma} to obtain that with probability $1-\delta/r^2$,
  \begin{align}\label{L_bound}
    \sup_{\varphi\in\R'}|L_{\varphi,\varepsilon}(\gamma',\gamma'')|&\leq rw^2\theta^2\varepsilon+3\log\frac{r^3}\delta
    \\&\lesssim \varepsilon r s^2\theta^4+(\varepsilon rs\theta^3+1)\log\frac r\delta
    \\&\leq \varepsilon r\theta\big(s^2\theta^3+\log\frac r\delta\big)+\log\frac r\delta
  \end{align}
  here we have used $w^2\lesssim s^2\theta^2+s\theta\log(r/\delta)$ and $s\theta^2\leq1$. The right-hand side is bounded by $h=\frac{r\theta}{10}$ by the conditions $\varepsilon\le\frac{c}{s^2\theta^2+\log(r/\delta)}$ and $\frac{C\log(r/\delta)}r\le\theta$. By the law of total probability applied to $\rho_{i_o}$ and $\rho_{i_2}$, it holds that
$\prob\big(E_{i_1,i_2}(\tfrac{r\theta}{10})\big|\Omega_{i_1,i_2}\big)\leq \frac\delta{r^2}$. We can now bound the probability that $h_\varepsilon(\bA)>r\theta/10$ in the following way:
  \begin{align*}\prob\big(\bigcup_{i_1\neq i_2}E_{i_1,i_2}(\tfrac{r\theta}{10})\big)
    &\leq\prob(\Omega^c)+\sum_{i_1\neq i_2}\prob(E_{i_1,i_2}(\tfrac{r\theta}{10})\cap\Omega)
  \\&\leq\prob(\Omega^c)+\sum_{i_1\neq i_2}\prob(E_{i_1,i_2}(\tfrac{r\theta}{10})\cap\Omega_{i_1,i_2})
  \\&\leq\prob(\Omega^c)+\sum_{i_1\neq i_2}\prob(E_{i_1,i_2}(\tfrac{r\theta}{10})|\Omega_{i_1,i_2})\\&\leq\delta+\sum_{i_1,i_2=1}^r\frac\delta{r^2}=2\delta
  \end{align*}
\end{proof}


\section{Robustness to perturbation}\label{sec:robust}
\newtheorem{noiselemma}{Lemma \ref{lem:noiselemma}}
\renewcommand\thenoiselemma{\unskip}
\begin{noiselemma}
  Let $0<\varepsilon<1$. Let $\gamma_1,\gamma_2\in(\R\setminus[-c,c])^K$ , and let $\overline\gamma_i\in\R^K$, $i=1,2$, be such that
  \[{|\overline\gamma_i-\gamma_i|_\infty\leq (1-e^{-\varepsilon/2}})c\text{,}\]
Then,
\[L_{\varphi,0}(\gamma_1,\gamma_2)\subseteq L_{\varphi,\varepsilon}(\overline\gamma_1,\overline\gamma_2)\text{.}\]
\end{noiselemma}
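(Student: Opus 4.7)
The plan is to show that for each $k \in L_{\varphi,0}(\gamma_1,\gamma_2)$, the perturbed ratio $\overline\gamma_2(k)/\overline\gamma_1(k)$ differs from the exact ratio $\gamma_2(k)/\gamma_1(k) = \varphi$ by at most a multiplicative factor in $[e^{-\varepsilon}, e^\varepsilon]$. The natural route is to bound the relative perturbation of each $\gamma_i(k)$ separately and then combine the two via the identity
\[\varphi^{-1}\frac{\overline\gamma_2(k)}{\overline\gamma_1(k)} = \frac{\overline\gamma_2(k)/\gamma_2(k)}{\overline\gamma_1(k)/\gamma_1(k)},\]
so that a one-sided error of $e^{\pm\varepsilon/2}$ in each factor propagates to a two-sided error of $e^{\pm\varepsilon}$ in the ratio.

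First I would bound each factor. Since $|\gamma_i(k)| \geq c$ and $|\overline\gamma_i(k) - \gamma_i(k)| \leq (1 - e^{-\varepsilon/2})c$, dividing gives
\[\left|\frac{\overline\gamma_i(k)}{\gamma_i(k)} - 1\right| \leq 1 - e^{-\varepsilon/2},\]
hence $\overline\gamma_i(k)/\gamma_i(k) \in [e^{-\varepsilon/2},\, 2 - e^{-\varepsilon/2}]$. A one-line calculation shows $2 - e^{-\varepsilon/2} \leq e^{\varepsilon/2}$ (equivalently $2 \leq e^{\varepsilon/2} + e^{-\varepsilon/2} = 2\cosh(\varepsilon/2)$, with strict inequality for $\varepsilon > 0$), so in fact $\overline\gamma_i(k)/\gamma_i(k) \in [e^{-\varepsilon/2}, e^{\varepsilon/2}]$. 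In particular the ratio is strictly positive, which guarantees $\overline\gamma_i(k) \neq 0$ with the same sign as $\gamma_i(k)$, so the ratio $\overline\gamma_2(k)/\overline\gamma_1(k)$ is well defined and has the same sign as $\varphi$.

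Next I would combine the two bounds via the identity above: the numerator and denominator both lie in $[e^{-\varepsilon/2}, e^{\varepsilon/2}]$, so their ratio lies in $[e^{-\varepsilon}, e^\varepsilon]$. Using $\gamma_2(k)/\gamma_1(k) = \varphi$ (which is exactly the condition $k \in L_{\varphi,0}(\gamma_1,\gamma_2)$) this gives
\[e^{-\varepsilon} \;\leq\; \varphi^{-1}\frac{\overline\gamma_2(k)}{\overline\gamma_1(k)} \;\leq\; e^\varepsilon,\]
which is the membership condition for $L_{\varphi,\varepsilon}(\overline\gamma_1,\overline\gamma_2)$ in Definition \ref{def:L_def}.

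There is no real obstacle here; the only mild subtlety is the strict upper inequality $< e^\varepsilon$ in the definition, which is handled by noting $2 - e^{-\varepsilon/2} < e^{\varepsilon/2}$ strictly whenever $\varepsilon > 0$ (which is assumed). The whole argument is essentially a careful multiplicative propagation-of-error estimate, and the slack factor of $1/2$ in the exponent of the hypothesis is precisely what lets a one-sided per-coordinate error of size $e^{\pm\varepsilon/2}$ produce a two-sided error of size $e^{\pm\varepsilon}$ in the quotient.
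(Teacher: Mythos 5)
Your proof is correct and follows essentially the same route as the paper's: the paper takes logarithms and bounds $\big|\log(\overline\gamma_i(k)/\gamma_i(k))\big|\leq\varepsilon/2$ for each $i$ via the triangle inequality, which is just the additive form of your multiplicative bound $\overline\gamma_i(k)/\gamma_i(k)\in[e^{-\varepsilon/2},e^{\varepsilon/2}]$. If anything you are slightly more careful, making explicit the inequality $2-e^{-\varepsilon/2}<e^{\varepsilon/2}$, the sign preservation that keeps the perturbed ratio well defined, and the strict upper bound required by Definition \ref{def:L_def}, all of which the paper leaves implicit.
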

\begin{proof}
  Let $\varphi=y/x$. We need to show that if $|x|\wedge|y|\geq c$ and $|\overline x-x|\wedge|\overline y-y|\leq(1-e^{-\varepsilon/2})c$, then
  \[e^{-\varepsilon}\leq\varphi^{-1}\frac{\ol y}{\ol x}=\frac{x}{y}\frac{\ol y}{\ol x}\leq e^{\varepsilon}.\]
  Taking the logarithm of the middle expression gives us
  \begin{align*}\Big|\frac{y}{x}\frac{\overline x}{\overline y}\Big|&\leq\Big|\log\frac{\overline x}x\Big|+\Big|\log\frac{\overline y}y\Big|\\&=\Big|\log\big(1+\frac{\overline x-x}x\big)\Big|+\Big|\log\big(1+\frac{\overline y-y}y\big)\Big|\leq \varepsilon,
  \end{align*}
  which finishes the proof of the lemma.
\end{proof}

\newtheorem{noisetheorem}{Theorem \ref{thm:one_column}}
\renewcommand\thenoisetheorem{\unskip}
\begin{noisetheorem}
  Let $\bA\in\R^{r\times s}$ satisfiy \OC{h_{2\varepsilon}(\bA)}. Let $\overline\bSigma$ be such that $|\overline\bSigma-\bA\bA^\top|_\infty\leq (1-e^{-\varepsilon/4})c$, where $c$ is as in definition \ref{def:L_def}. Let $(i_1,i_2)\in\bSigma_j$, and suppose $|\ba_j(i_1)|\wedge|\ba_j(i_2)|\geq 10c$. Then ${\hat\ba=\subalgname(\overline\bSigma,(i_1,i_2),c,\varepsilon)}$ satisfies
  \[\min_{\sigma=\pm1}|\sigma\hat \ba-\ba_j|_\infty=O(|\bA|_\infty|\ba_j|_\infty\varepsilon).\]
\end{noisetheorem}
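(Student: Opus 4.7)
The plan is to mimic the population-case argument of Theorem \ref{thm:correctness} while tracking perturbation via Lemma \ref{lem:noiselemma}. Write $\varphi_j := \ba_j(i_2)/\ba_j(i_1)$ and $r_i := \ba_j(i)/\ba_j(i_1)$. Because $(i_1,i_2)\in\bSigma_j$, for every $k\in I_j\setminus C_{j,i_\ell}$ the covariance collapses to $\gamma_{i_\ell}(k)=\ba_j(i_\ell)\ba_j(k)$, so the exact ratio $\gamma_{i_2}(k)/\gamma_{i_1}(k)=\varphi_j$ holds on the ``pure core''
\[
K^\star := \essenS_j\setminus\bigl(C_{j,i_1}\cup C_{j,i_2}\cup\{i_1,i_2\}\bigr).
\]
The assumption $|\ba_j(i_1)|\wedge|\ba_j(i_2)|\geq 10c$ combined with $|\ba_j(k)|\geq 1/10$ for $k\in\essenS_j$ guarantees $|\gamma_{i_\ell}(k)|\geq c$ on $K^\star$, and the perturbation $\leq(1-e^{-\varepsilon/4})c$ places $K^\star$ inside the algorithm's set $K$ (up to a negligible loss absorbed in the threshold constants).

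The first step is to show $|\log(\hat\varphi/\varphi_j)|\leq\varepsilon$. Applying Lemma \ref{lem:noiselemma} with parameter $\varepsilon/2$ (matched to the hypothesis on $|\ol\bSigma-\bSigma|_\infty$) gives $K^\star\subseteq L_{\varphi_j,\varepsilon/2}(\ol\gamma_{i_1},\ol\gamma_{i_2})$, so the grid point $\tilde\varphi_j\in\pm e^{\varepsilon\Z}$ nearest $\varphi_j$ satisfies $|L_{\tilde\varphi_j,\varepsilon}(\ol\gamma_{i_1},\ol\gamma_{i_2})|\geq |K^\star|\geq |I_j|-|I_j\setminus\essenS_j|-2m_j-2$. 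Running the lemma in reverse yields $L_{\tilde\varphi,\varepsilon}(\ol\gamma)\subseteq L_{\tilde\varphi,2\varepsilon}(\gamma)$ up to a small loss in constants, so any $\tilde\varphi\not\sim\varphi_j$ has $L_{\tilde\varphi,\varepsilon}(\ol\gamma)\cap I_j\subseteq C_{j,i_1}\cup C_{j,i_2}$ (the true ratio equals $\varphi_j$ off these sets) and $|L_{\tilde\varphi,\varepsilon}(\ol\gamma)\setminus I_j|\leq h_{2\varepsilon}(\bA)$ by definition of $h_\varepsilon$. Condition \OC{h_{2\varepsilon}(\bA)} precisely gives $|K^\star|>2m_j+h_{2\varepsilon}(\bA)$, forcing $\hat\varphi\sim\varphi_j$. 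Hence $L=L_{\hat\varphi,2\varepsilon}(\ol\gamma)$ contains $K^\star$ and obeys $|L\setminus I_j|\leq h_{2\varepsilon}(\bA)$.

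The second step controls the median. For each $i$, the indices $k\in L\cap\bigl(I_j\setminus(C_{j,i_1}\cup C_{j,i})\bigr)$ give $\gamma_i(k)/\gamma_{i_1}(k)=r_i$ exactly, and \OC{h_{2\varepsilon}(\bA)} (now with the third overlap term $C_{j,i}$ included, exactly as in the population proof) forces them to form a strict majority of $L$. The pointwise estimate
\[
\Bigl|\ol\gamma_i(k)/\ol\gamma_{i_1}(k)-r_i\Bigr|\leq\bigl(|\ol\gamma_i(k)-\gamma_i(k)|+|r_i|\,|\ol\gamma_{i_1}(k)-\gamma_{i_1}(k)|\bigr)/|\ol\gamma_{i_1}(k)|=O(\varepsilon)(1+|r_i|),
\]
using $|\ol\gamma_{i_1}(k)|\geq c$ and $|\ol\bSigma-\bSigma|_\infty\leq\varepsilon c/4$, transfers to the median, giving $|\tilde\ba(i)-r_i|=O(\varepsilon)(1+|r_i|)$; for $i\notin I_j$ the good $k$ satisfy $\gamma_i(k)=0$, so $\tilde\ba(i)=O(\varepsilon)$ directly.

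Finally, the rescaling $\hat\ba=\sqrt{\ol\Sigma_{i_1i_2}/\tilde\ba(i_2)}\,\tilde\ba$ is handled by substituting $\ol\Sigma_{i_1i_2}=\ba_j(i_1)\ba_j(i_2)+O(\varepsilon c)$ and $\tilde\ba(i_2)=\varphi_j\bigl(1+O(\varepsilon|\bA|_\infty/|\ba_j(i_1)|)\bigr)$, which yields $\sqrt{\ol\Sigma_{i_1i_2}/\tilde\ba(i_2)}=|\ba_j(i_1)|(1+O(\varepsilon))$ and then $\hat\ba(i)=\sgn(\ba_j(i_1))\,\ba_j(i)+O(|\bA|_\infty|\ba_j|_\infty\varepsilon)$; the factor $|\bA|_\infty$ enters through $|r_i|\leq|\ba_j|_\infty/|\ba_j(i_1)|$ being propagated across the rescaling. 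The main technical obstacle will be the bookkeeping of multiplicative $\varepsilon$-errors as they pass through the discretized mode computation, the relaxed $L_{\varphi,2\varepsilon}$ definition, and the nonlinear square-root rescaling, while ensuring every invocation of \OC{h_{2\varepsilon}(\bA)} absorbs the slop from translating between $\gamma$ and $\ol\gamma$.
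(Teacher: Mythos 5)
Your proposal is correct and follows essentially the same route as the paper's proof: the same pure core $\essenS_j\setminus(C_{j,i_1}\cup C_{j,i_2})$ fed through Lemma \ref{lem:noiselemma}, the same use of \OC{h_{2\varepsilon}(\bA)} to force $\hat\varphi\sim\varphi_j$ and to make the exact-ratio indices a majority of $L$, and the same pointwise-ratio and square-root rescaling error analysis. The only nitpick is that the mode step yields $|\log(\hat\varphi/\varphi_j)|\leq\tfrac32\varepsilon$ rather than $\varepsilon$ (disjointness of $L_{\tilde\varphi,\varepsilon}$ from $L_{\varphi_j,\varepsilon/2}$ requires a gap of $\tfrac32\varepsilon$), which is still exactly enough for $L_{\varphi_j,\varepsilon/2}\subseteq L_{\hat\varphi,2\varepsilon}$ and so does not affect the conclusion.
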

\begin{proof}
  Assume for notational convenience that $i_1=1$ and $i_2=2$. We also assume for simplicity that $|\gamma_{1}(k)|,|\gamma_2(k)|$ are not in the interval $[c-\varepsilon,c+\varepsilon]$. Let $\varphi=\ba_j(2)/\ba_j(1)$. As in the proof of theorem \ref{thm:correctness} we have that $I_j\setminus(C_{j,1}\cup C_{j,2})\subset L_\varphi(\gamma_{1},\gamma_{2})$. Define the subset
  \[\essenS'=I_j\setminus(C_{j,1}\cup C_{j,2}).\]
  Assumption $|\ba_j(1)|\wedge|\ba_j(2)|\geq 10c$ implies that $|\gamma_i(k)|=|\ba_j(i)\ba_j(k)|\geq c$ for $i=1,2$, and $k\in \essenS_j\setminus(C_{j,1}\cup C_{j,2})$ (recall that $\essenS_j=\{k\big|\:|\ba_j|\geq\frac1{10}\}$). Hence it holds that
  \begin{equation}\label{eq:S_inclusion}\essenS'\subset L_{\varphi,0}(\gamma_{1},\gamma_{2})\cap K\subset L_{\varphi,\varepsilon/2}(\overline\gamma_{1},\overline\gamma_{2}),\end{equation}
  where $K=\{k\in[r]:|\gamma_1(k)|\wedge|\gamma_2(k)|\geq c\}$ and the last inclusion is lemma \ref{lem:noiselemma}.
Pick $\varphi'=\pm e^{\varepsilon m}$ with $m\in\Z$ such that
  \[e^{-\varepsilon/2}\leq\varphi/\varphi'\leq e^{\varepsilon/2}.\]
  Then $L_{\varphi,\varepsilon/2}(\gamma_{1},\gamma_{2})\subset L_{\varphi',\varepsilon}(\gamma_{1},\gamma_{2})$, which combined with \eqref{eq:S_inclusion} yields $\essenS'\subset L_{\varphi',\varepsilon}(\overline\gamma_{1},\overline\gamma_{2})$.
  Condition \OC{h_\varepsilon(\bA)} implies that $|\essenS'|\geq|\essenS_j|-2m_j>h_\varepsilon(\bA)$, so $|L_{\varphi',\varepsilon}(\overline\gamma_{1},\overline\gamma_{2})\cap K|>h_\varepsilon(\bA)$. It must then also hold that
  \begin{equation}|L_{\hat\varphi,\varepsilon}(\overline\gamma_1,\overline\gamma_2)\cap K|>h_\varepsilon(\bA),\label{eq:L_h}\end{equation}
  since $\hat\varphi$ maximizes the LHS among $\varphi'\in\pm e^{\varepsilon\Z}$. We now show that
  \begin{equation}\label{eq:32epsilon}e^{-\frac32\varepsilon}\leq\hat\varphi/\varphi\leq e^{\frac32\varepsilon}.\end{equation}
  We let $\varphi'$ be such that $|\log(\varphi'/\varphi)|>\frac32\varepsilon$, and show that $\varphi'\neq\hat\varphi$. It holds that $L_{\varphi',\varepsilon}(\overline\gamma_1,\overline\gamma_2)$ is disjoint from $L_{\varphi,\varepsilon/2}(\overline\gamma_1,\overline\gamma_2)$ and hence also from $\essenS_j$, a subset of the latter by \eqref{eq:S_inclusion}. By definition \ref{def:h_definition} it holds that $L_{\varphi',\varepsilon}(\overline\gamma_1,\overline\gamma_2)\leq h_\varepsilon(\bA)$, and comparing with \eqref{eq:L_h} shows that $\varphi'\neq\hat\varphi$, proving \eqref{eq:32epsilon}.

  From \eqref{eq:32epsilon} it follows that $L:=L_{\hat\varphi,2\varepsilon}(\overline\gamma_1,\overline\gamma_2)$ contains $L_{\varphi,\varepsilon/2}(\overline\gamma_1,\overline\gamma_2)$, which combinined with \eqref{eq:S_inclusion} yields
  \begin{equation}\essenS'\subset L.\label{eq:between}\end{equation}
 Recalling definition \ref{def:h_definition}, we also get that $|L|\leq|I_j|+h_{2\varepsilon}(\bA)$.

 Let $i$ be as in line \ref{loop} of $\subalgname$ and let
 \[\essenS^i=\essenS_j\setminus(C_{j,1}\cup C_{j,2}\cup C_{j,i})\subset L.\]
 The set inclusion follows from \eqref{eq:between}. Then all entries of $\gamma_i(\essenS^i)/\gamma_1(\essenS^i)$ are $\psi=\ba_j(i)/\ba_j(1)$. But $|\essenS^i|$ contains more than half the elements of $L$,
 \begin{align*}|\essenS^i|&\geq |\essenS_j|-3m_j
 >\frac{|I_j|+h_{2\varepsilon}(\bA)}2\geq\frac{|L|}2.\end{align*}
Here, the strict inequality is exactly condition \OC{h_{2\varepsilon(\bA)}}. It follows that
\[\tilde\ba(i)=\operatorname{median}(\overline\gamma_i(L)/\overline\gamma_1(L))\in\Big[\min_{k\in\essenS^i} \overline \psi(k),\max_{k\in\essenS^i} \overline \psi(k)\Big],\]
where $\overline \psi(k)=\overline\gamma_i(k)/\overline\gamma_1(k)$. Let $\psi=\ba_j(i)/\ba_j(1)$. Then $|\overline\gamma_i-\gamma_i|_\infty\leq c\varepsilon$ and $|\gamma_1(k)|,|\gamma_i(k)|=\Omega(1)$, imply that for $k\in \essenS^i$
\begin{align*}|\overline \psi(k)-\psi|
  &\leq\Big|\frac{(\gamma_1(k)-\overline\gamma_1(k))\overline\gamma_i(k)}{\gamma_1(k)\overline\gamma_1(k)}\Big|+\Big|\frac{\overline\gamma_i(k)-\gamma_i(k)}{\gamma_1(k)}\Big|
  \\&=O\Big(\frac{|\gamma_i(k)|}{|\gamma_1(k)|}+\varepsilon\Big)
  \\&=O\Big(\frac{|\ba_j(i)|}{|\ba_j(1)|}\varepsilon+\varepsilon\Big),\end{align*}
Apply the fact that $\tilde\ba(i)$ is in an interval containing $\overline\psi(k),k\in\essenS^i$ to get $|\ba_j(1)\tilde\ba(i)-\ba_j(1)\psi|$, which translates into
\begin{equation}\label{eq:secondtolast}|\ba_j(1)\tilde\ba-\ba_j|_\infty=O(\varepsilon|\ba_j|_\infty),\end{equation}
The algorithm outputs $\hat\ba=\lambda\tilde\ba$, where
\begin{align*}
  \lambda&=\sqrt{\overline\Sigma_{12}/\tilde\ba(2)}
  \\&=\sqrt{\frac{\ba_j(2)\ba_j(1)+O(\varepsilon)}{\ba_j(2)/\ba_j(1)+O(\varepsilon/\ba_j(1))}}
  \\&=\Big(1+O\big(\frac{\varepsilon}{\ba_j(1)\ba_j(2)}+\frac{\varepsilon}{\ba_j(2)}\big)\Big)\ba_j(1)
  \\&=|\ba_j(1)|+O\big((1+|\ba_j(1)|)\varepsilon\big),\end{align*}
where we have used that $\ba_j(1),\ba_j(2)=\Omega(1)$. Combining with \eqref{eq:secondtolast} yields that
\begin{align*}|\lambda\tilde\ba-\sigma\ba_j|_\infty&=O((1+|\ba_j(1)|)\varepsilon|\ba_j|_\infty+\varepsilon|\ba_j|_\infty).\\&=O(\varepsilon|\bA|_\infty|\ba_j|_\infty),\end{align*}
where $\sgn\ba_j(1)$. Since the output of $\subalgname$ is $\lambda\tilde\ba$, this finishes the proof.
\end{proof}


\section{Concentration inequalities}
\label{concentration}
We use the following bounds from \citet{Dud78}:
\begin{theorem}[Okamoto]\label{thm:Okamoto}
Let $X\sim \Bin(n,p)$ with $p\leq 1/2$ and let $0\leq \varepsilon\leq 2p$. Then
\begin{align}
  P\Big(\frac{X}{n}\leq& p-\varepsilon\Big)\leq \exp\Big(-\frac{n\varepsilon^2}{2p}\Big)
  \\P\Big(\frac{X}{n}\geq& p+\varepsilon\Big)\leq \exp\Big(-\frac{n\varepsilon^2}{6p}\Big)
\end{align}
\end{theorem}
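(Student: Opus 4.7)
The plan is to obtain both tail bounds via the exponential (Chernoff) method, reducing each to a lower bound on the Kullback--Leibler divergence between two Bernoullis. First, writing $X=\sum_{i=1}^n Y_i$ with $Y_i\sim\Ber(p)$ i.i.d.\ and using $\bbE e^{tX}=\bigl((1-p)+pe^t\bigr)^n$, I would apply Markov's inequality to $e^{tX}$ with $t>0$ for the upper tail (and $t<0$ for the lower tail) and then optimize over $t$ to obtain
\[\prob\bigl(X/n\geq p+\varepsilon\bigr)\leq \exp\bigl(-n\,d(p+\varepsilon\|p)\bigr),\qquad \prob\bigl(X/n\leq p-\varepsilon\bigr)\leq \exp\bigl(-n\,d(p-\varepsilon\|p)\bigr),\]
where $d(q\|p)=q\log(q/p)+(1-q)\log\tfrac{1-q}{1-p}$ is the Bernoulli KL divergence. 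This is entirely standard and the optimization in $t$ is routine.

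Second, I would lower bound each KL divergence by exploiting the fact that $q\mapsto d(q\|p)$ vanishes with vanishing derivative at $q=p$ and has second derivative $1/\bigl(q(1-q)\bigr)$. Taylor's theorem with integral remainder then gives
\[d(p+\varepsilon\|p)=\int_0^\varepsilon\!\int_0^s \frac{dt\,ds}{(p+t)(1-p-t)},\qquad d(p-\varepsilon\|p)=\int_0^\varepsilon\!\int_0^s \frac{dt\,ds}{(p-t)(1-p+t)}.\]
On the upper integration path, the assumption $\varepsilon\leq 2p$ keeps $p+t\leq 3p$, so the integrand is at least $1/(3p)$, yielding $d(p+\varepsilon\|p)\geq \varepsilon^2/(6p)$. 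On the lower integration path one has $p-t\leq p$ and $1-p+t\leq 1$, so the integrand is at least $1/p$, yielding $d(p-\varepsilon\|p)\geq \varepsilon^2/(2p)$ (the constraint $\varepsilon\leq p$ is automatic, since otherwise $X/n\geq 0> p-\varepsilon$ makes the statement trivial).

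Combining the two steps produces the stated tail bounds. The main obstacle is extracting the sharp constants $2$ and $6$: the asymmetry between the two tails is forced by the curvature bound $1/\bigl(q(1-q)\bigr)\geq 1/q$ on the respective paths, and the admissible range $\varepsilon\leq 2p$ on the upper side necessitates the weaker factor $1/(3p)$, whereas on the lower side $q\leq p$ throughout, which supplies the sharper $1/p$ and hence the smaller denominator in the exponent.
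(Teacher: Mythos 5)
The paper does not prove this statement: it is quoted as a known inequality of Okamoto, imported from \citet{Dud78} at the head of Appendix~\ref{concentration}, so there is no in-paper argument to compare against. Your proposal is a correct, self-contained derivation. The Chernoff step $\prob(X/n\geq p+\varepsilon)\leq\exp(-n\,d(p+\varepsilon\|p))$ (and its lower-tail analogue) is standard, the identity $\tfrac{d^2}{dq^2}d(q\|p)=1/(q(1-q))$ with vanishing value and first derivative at $q=p$ is right, and the two curvature bounds are exactly where the constants come from: on $[p,p+\varepsilon]$ the hypothesis $\varepsilon\leq 2p$ gives $q(1-q)\leq q\leq 3p$, hence $d(p+\varepsilon\|p)\geq\varepsilon^2/(6p)$, while on $[p-\varepsilon,p]$ one has $q\leq p$ and $1-q\leq 1$ (using $t\leq\varepsilon\leq p$), hence $d(p-\varepsilon\|p)\geq\varepsilon^2/(2p)$; your observation that $\varepsilon>p$ renders the lower tail vacuous is also correct. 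The only point worth a half-sentence in a polished write-up is the degenerate case $p+\varepsilon\geq 1$ (possible here since $3p$ may exceed $1$ when $p>1/3$), where the upper-tail probability is $0$ or $p^n$ and the claimed bound holds by direct inspection rather than through the integral representation. With that caveat recorded, the argument is complete and yields exactly the stated constants.
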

Setting $\varepsilon=\sqrt{\frac{6p}{n}\log\frac1\delta}$ yields the restatement of theorem \ref{thm:Okamoto} that for ${X\sim \Bin(n,p)}$,
\begin{equation}\prob\Big(X\leq np-\sqrt{2np\log(1/\delta)}\Big)\vee\prob\Big(X\geq np+ \sqrt{6np\log(1/\delta)}\Big)\leq\delta\label{eq:Okamotologeq}\end{equation}
We also get a weaker bound which in some settings is more convenient
\begin{corollary}\label{cor:weakbound}
For $X\sim\Bin(n,p)$,
\[\prob\Big(X\leq\frac34np+2\log\frac1\delta\Big)\vee\prob\Big(X\geq\frac54np+6\log\frac1\delta\Big)\leq \delta\]
\end{corollary}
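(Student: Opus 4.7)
The plan is to derive the corollary directly from Okamoto's theorem via a single application of AM--GM to absorb the square-root deviation terms. Starting from the restatement \eqref{eq:Okamotologeq} of theorem \ref{thm:Okamoto}, for $X \sim \Bin(n,p)$ I have
\[ \prob\big(X \leq np - \sqrt{2np\log(1/\delta)}\big) \leq \delta \quad\text{and}\quad \prob\big(X \geq np + \sqrt{6np\log(1/\delta)}\big) \leq \delta. \]
So the only thing to do is to rewrite the square-root terms in the form ``a fraction of $np$ plus a multiple of $\log(1/\delta)$''.

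Next I would apply the inequality $\sqrt{ab} \leq (a+b)/2$ with the split $a = np/2$ in each case. For the lower tail, taking $b = 4\log(1/\delta)$ gives $\sqrt{2np\log(1/\delta)} \leq np/4 + 2\log(1/\delta)$. For the upper tail, taking $b = 12\log(1/\delta)$ gives $\sqrt{6np\log(1/\delta)} \leq np/4 + 6\log(1/\delta)$. Substituting these two bounds into the Okamoto tails and simplifying yields
\[ \prob\big(X \leq \tfrac{3}{4}np - 2\log(1/\delta)\big) \leq \delta \quad\text{and}\quad \prob\big(X \geq \tfrac{5}{4}np + 6\log(1/\delta)\big) \leq \delta, \]
which is the content of the corollary. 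The choice $a = np/2$ in both cases is what forces the deviation from $np$ to be exactly $np/4$, producing the $3/4$ and $5/4$ constants on the two sides. (The ``$+$'' in the lower-tail probability as typeset in the corollary appears to be a typo for ``$-$''; with ``$-$'' the statement is both nontrivial and matches its downstream use, e.g., the bound $\prob(\min_j|I_j|<\tfrac12 r\theta)\leq\delta/s$ extracted from the corollary in the proof of lemma \ref{lem:overlapbound}.)

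There is essentially no obstacle here --- the corollary is a one-line consequence of Okamoto's bound together with AM--GM. The mild bookkeeping issue is simply to pick the AM--GM split so that the resulting constants $2$ and $6$ multiplying $\log(1/\delta)$ emerge cleanly from the original constants $2$ and $6$ inside the square roots coming from \eqref{eq:Okamotologeq}, which the choice $a = np/2$ achieves in both tails.
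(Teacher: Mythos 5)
Your proof is correct and takes essentially the same route as the paper, which applies the weighted AM--GM inequality $xy\leq x^2/4+y^2$ with $x=\sqrt{np}$ and $y=\sqrt{2\log(1/\delta)}$ (resp.\ $y=\sqrt{6\log(1/\delta)}$) to obtain exactly the bounds $\sqrt{2np\log(1/\delta)}\leq np/4+2\log(1/\delta)$ and $\sqrt{6np\log(1/\delta)}\leq np/4+6\log(1/\delta)$ that you derive. You are also right that the ``$+$'' in the lower-tail event is a typo for ``$-$'': the paper's own proof establishes $np-\sqrt{2np\log(1/\delta)}\geq\frac34np-2\log(1/\delta)$, i.e.\ the ``$-$'' version, which is what is actually used downstream in lemma \ref{lem:overlapbound}.
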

\begin{proof}
  Use the inequality $xy\leq x^2/4+y^2$, which implies $x^2-xy\geq \frac34x^2-y^2$ and $x^2+xy\leq \frac54x^2+y^2$. Let $x=\sqrt{np}$ and set $y$ equal to $\sqrt{2\log(1/\delta)}$ and $\sqrt{6\log(1/\delta)}$ respectively to get
  \begin{align}
    np-\sqrt{2np\log\tfrac1\delta}&\geq\frac34np-2\log\frac1\delta
    \\
    np+\sqrt{6np\log\tfrac1\delta}&\leq\frac54np+6\log\frac1\delta
  \end{align}
  The result now follows from \eqref{eq:Okamotologeq}.
\end{proof}

\begin{corollary}\label{cor:AAbound}
Let $\bA\sim\BG(r,s,\theta)$. Then
\[\prob(|\bA\bA^\top|_\infty> 5s\theta)\leq re^{-s\theta/6}\]
\end{corollary}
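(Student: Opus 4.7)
The plan is to reduce the entrywise bound on $\bA\bA^\top$ to a bound on its diagonal entries, and then apply a Chernoff inequality row-by-row.

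First I would reduce to the diagonal. By Cauchy--Schwarz applied to the rows $\rho_i,\rho_{i'}$ of $\bA$, we have $|(\bA\bA^\top)_{ii'}|=|\langle\rho_i,\rho_{i'}\rangle|\leq\|\rho_i\|_2\|\rho_{i'}\|_2$, and AM--GM then gives $\|\rho_i\|_2\|\rho_{i'}\|_2\leq\max_i\|\rho_i\|_2^2=\max_i(\bA\bA^\top)_{ii}$. Hence $|\bA\bA^\top|_\infty=\max_i(\bA\bA^\top)_{ii}$, and it suffices to control the $r$ nonnegative diagonal entries by a union bound. This step neatly sidesteps a separate analysis of the mean-zero off-diagonal entries.

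Next, for a fixed $i$, the diagonal entry is $Y_i=(\bA\bA^\top)_{ii}=\sum_{j=1}^sB_{ij}\xi_{ij}^2$, a sum of $s$ i.i.d.\ nonnegative random variables with mean $\theta$, so a Chernoff bound is natural. A short calculation gives the single-term MGF $\bbE[e^{tB_{ij}\xi_{ij}^2}]=(1-\theta)+\theta(1-2t)^{-1/2}$, valid for $t<1/2$, and combining with $1+x\leq e^x$ one obtains $\bbE[e^{tY_i}]\leq\exp\bigl(s\theta((1-2t)^{-1/2}-1)\bigr)$. Markov's inequality then produces
\[
\prob(Y_i>5s\theta)\leq\exp\Bigl(s\theta\bigl((1-2t)^{-1/2}-1-5t\bigr)\Bigr).
\]
Choosing for instance $t=1/5$ makes the bracketed expression roughly $-0.7$, hence at most $-1/6$, so $\prob(Y_i>5s\theta)\leq e^{-s\theta/6}$.

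Finally, a union bound over $i=1,\ldots,r$ combined with the reduction step yields $\prob(|\bA\bA^\top|_\infty>5s\theta)\leq re^{-s\theta/6}$, as claimed. No part of the argument is a real obstacle; the only mild points are the idea of passing to the diagonal (which avoids dealing with the product-of-Gaussians off-diagonal entries) and tuning $t$ to hit the stated constant $1/6$ in the exponent.
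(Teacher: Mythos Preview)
Your proof is correct. Both you and the paper begin by reducing $|\bA\bA^\top|_\infty$ to $\max_i\|\rho_i\|_2^2$ via Cauchy--Schwarz and then take a union bound over the $r$ rows; the difference lies in how the single-row tail $\prob(\|\rho_i\|_2^2>5s\theta)$ is controlled.

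The paper conditions on the support size $|R_i|=w$, notes that $\|\rho_i\|_2^2\sim\chi^2_w$, applies the Laurent--Massart chi-squared tail bound, and then combines this with Okamoto's binomial bound for $\prob(|R_i|>2s\theta)\le e^{-s\theta/6}$. Your route is more direct: you keep the Bernoulli and Gaussian randomness together, compute the MGF of a single summand $B\xi^2$ exactly as $(1-\theta)+\theta(1-2t)^{-1/2}$, and run a one-shot Chernoff bound. This avoids the conditioning step and the appeal to two separate concentration lemmas, at the cost of a small numerical optimization over $t$. The paper's decomposition, on the other hand, makes the two sources of randomness (sparsity pattern versus Gaussian magnitudes) explicit, which fits its broader use of those same binomial bounds elsewhere. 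Either argument delivers the stated constant $1/6$ in the exponent without difficulty.
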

\begin{proof}
 Conditioned on $|R_i|=w$, $\|\rho_i\|_2^2\sim\chi_w^2$ is chi-squared with $w$ degrees of freedom. Lemma 1 of \citet{LauMas00} then implies
\[\prob\big(\|\rho_i\|_2^2>w+2\sqrt{wt}+2t\big|\:|R_i|=w\big)\leq e^{-t}\]
We use the inequality $2\sqrt{wt}\leq w+t$ to replace the bound above by $2w+3t$, i.e. $\prob\big(\|\rho_i\|_2^2>2w+3t\big|\:|R_i|=w\big)\leq e^{-t}$. In particular,
\begin{equation}\label{Bernstein}\prob\big(\|\rho_i\|_2^2>4s\theta+3t\big|\:|R_i|\leq 2s\theta\big)\leq e^{-t}\end{equation}
Since $|R_i|\sim\Bin(s,\theta)$, theorem \ref{thm:Okamoto} implies that
$\label{supportconcentration}\prob(|R_i|\geq2s\theta)\leq e^{-{s\theta}/{6}}$. Combining this with \eqref{Bernstein} yields that $\prob(\|\rho_i\|_2^2>5s\theta)\leq e^{-s\theta/6}$. Since $|\bA\bA^\top|_\infty=\max_i\|\rho_i\|_2^2$, a union bound over $i$ yields the result.
\end{proof}

\paragraph{Concentration of sample covariances.}
The sample covariance $\overline\bSigma_n$ follows a Wishart distribution $\mathcal W(\bSigma,n)$. In this section we bound the difference $\bSigma_n-\bSigma$ entrywise.
\begin{lemma}\label{concnormal}
  Let $\binom{X_1}{Y_1},\ldots,\binom{X_n}{Y_n}\overset{i.i.d.}{\sim}\cN\left(\binom00,\Sigma\right)$ where $\Sigma=\binom{\sigma_{XX}\:\:\sigma_{XY}}{\sigma_{XY}\:\:\sigma_{YY}}$, and write $\overline\sigma_{XY}=\frac{1}{n}\sum_{i=1}^n X_iY_i$. Then,
\[\prob\bigg(\frac{|\overline\sigma_{XY}-\sigma_{XY}|}{\sigma_{XX}+\sigma_{YY}}\geq \sqrt{t/n}+t/n\bigg)\leq 4\exp(-t)\]
\end{lemma}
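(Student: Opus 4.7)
}
My plan is to reduce the bilinear statistic $\overline\sigma_{XY}$ to two chi-squared statistics via the polarization identity, then apply the standard Laurent--Massart tail bound (already invoked in Corollary \ref{cor:AAbound}) to each. The starting observation is
\[
X_iY_i \;=\; \tfrac14\bigl[(X_i+Y_i)^2 - (X_i-Y_i)^2\bigr],
\]
so if we set $Z_i=X_i+Y_i$ and $W_i=X_i-Y_i$, then
\[
\overline\sigma_{XY}-\sigma_{XY}
\;=\;
\tfrac14\Bigl[\bigl(\tfrac1n\textstyle\sum_i Z_i^2 - a\bigr)
-\bigl(\tfrac1n\sum_i W_i^2 - b\bigr)\Bigr],
\]
where $a=\sigma_{XX}+\sigma_{YY}+2\sigma_{XY}$ and $b=\sigma_{XX}+\sigma_{YY}-2\sigma_{XY}$ are the variances of $Z_i$ and $W_i$ respectively. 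Note in particular that $a+b = 2(\sigma_{XX}+\sigma_{YY})$.

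Next, although $Z_i$ and $W_i$ are in general correlated (their covariance is $\sigma_{XX}-\sigma_{YY}$), the marginal laws give $\frac1a\sum_i Z_i^2 \sim \chi^2_n$ and $\frac1b\sum_i W_i^2 \sim \chi^2_n$ individually. Lemma~1 of \citet{LauMas00} yields the two-sided bound
\[
\prob\Bigl(\bigl|\tfrac1n\textstyle\sum_i Z_i^2 - a\bigr|\;\geq\; 2a\bigl(\sqrt{t/n}+t/n\bigr)\Bigr)\;\leq\;2e^{-t},
\]
and the same bound with $b$ in place of $a$. A union bound over the two events, combined with the polarization identity above and the triangle inequality, gives
\[
\bigl|\overline\sigma_{XY}-\sigma_{XY}\bigr|
\;\leq\;
\tfrac{a+b}{2}\bigl(\sqrt{t/n}+t/n\bigr)
\;=\;
(\sigma_{XX}+\sigma_{YY})\bigl(\sqrt{t/n}+t/n\bigr)
\]
with probability at least $1-4e^{-t}$, which is exactly the claim after dividing by $\sigma_{XX}+\sigma_{YY}$.

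There is no real obstacle here, since the correlation between $Z_i$ and $W_i$ never enters the argument: we only invoke marginal tail bounds on $\sum Z_i^2$ and $\sum W_i^2$ and pay a factor of two in the failure probability via the union bound. The only minor point to verify is the arithmetic $\frac{a+b}{2}=\sigma_{XX}+\sigma_{YY}$, which produces precisely the normalization appearing in the lemma and makes the bound scale-free in the covariance entries.
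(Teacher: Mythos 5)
Your proposal is correct and follows essentially the same route as the paper: the polarization identity $X_iY_i=\tfrac14[(X_i+Y_i)^2-(X_i-Y_i)^2]$, the marginal Laurent--Massart chi-squared tail bound applied to each of the two quadratic forms (the correlation between $Z_i$ and $W_i$ indeed never matters), and a union bound paying the factor $4$, with the arithmetic $\tfrac{a+b}{2}=\sigma_{XX}+\sigma_{YY}$ producing the stated normalization. The only cosmetic difference is that the paper rescales $Z$ and $W$ to be standard normal before invoking the bound, whereas you carry the variances $a,b$ through directly.
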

\begin{proof}
Use the polarization identity $XY=\frac14((X+Y)^2-(X-Y)^2)$. Define $\sigma_+,\sigma_->0$ by $\sigma_+^2=\frac14\E(X+Y)^2$ and $\sigma_-^2=\frac14\E(X-Y)^2$, and let $Z=\frac{1}{2\sigma_+}(X+Y)$ and $W=\frac{1}{2\sigma_-}(X-Y)$. Then $Z,W\sim\cN(0,1)$ (they might not be independent), and
\[XY=\sigma_+^2Z^2-\sigma_-^2W^2\]
For each $j=1,\ldots,n$ let $Z_j=\frac{1}{2\sigma_+}(X_j+Y_j)$ and $W_j=\frac{1}{2\sigma_-}(X_j-Y_j)$. We can now write
\begin{equation}\label{difference}\overline\sigma_{XY}=\frac1n\sum_{j=1}^nX_jY_j=\frac1n\sum_{j=1}^n\sigma_+^2Z_j^2-\sigma_-^2W_j^2=\frac{\sigma_+^2}nQ_Z-\frac{\sigma_-^2}nQ_W\end{equation}
where $Q_Z=\sum_{j=1}^nZ_j^2$ and $W_n=\sum_{j=1}^nW_j^2$. Since $Q_Z,Q_W\sim\chi_n^2$, lemma 1 of \citet{LauMas00} implies $\prob(|\frac1n Q_Z-1|\geq 2(\sqrt{t/n}+t/n))\leq2\exp(-t)$ and similarly for $Q_W$. Apply \eqref{difference} and $\sigma_+^2-\sigma_-^2=\sigma_{XY}$ to get
\[\prob(|\overline\sigma_{XY}-\sigma_{XY}|\geq 2\sigma^2(\sqrt{t/n}+t/n))\leq 4\exp(-t)\]
where $\sigma^2=\sigma_+^2+\sigma_-^2$. Using the expressions $\sigma_+^2=\frac14(\sigma_{XX}+\sigma_{YY}+2\sigma_{XY})$ and $\sigma_-^2=\frac14(\sigma_{XX}+\sigma_{YY}-2\sigma_{XY})$ we get $\sigma^2=(\sigma_{XX}+\sigma_{YY})/2$.
\end{proof}

\begin{corollary}\label{thm:covariance_concentration}
Let $\overline\bSigma_n\sim \mathcal W(\bSigma,n)$ be a Wishart matrix with scale parameter $\bSigma\in\R^{r\times r}$, and suppose $0<\delta<1/2$, $n\geq2\log(r/\delta)$. Then with probability at least $1-\delta$,
\[\frac{|\overline\bSigma_n-\bSigma|_\infty}{2|\bSigma|_\infty}\leq 3\sqrt{\frac{\log(r/\delta)}n}\]
\end{corollary}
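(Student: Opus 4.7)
The plan is to deduce the corollary from Lemma~\ref{concnormal} by an entrywise union bound. Since $\overline\bSigma_n$ is the sample second moment of i.i.d.\ centered Gaussian vectors with covariance $\bSigma$, each entry $\overline\Sigma_{ij}$ is a sample covariance of the form treated in Lemma~\ref{concnormal} applied to the pair of coordinates $(X_m(i),X_m(j))$ with population parameters $\sigma_{XX}=\Sigma_{ii}$, $\sigma_{YY}=\Sigma_{jj}$, $\sigma_{XY}=\Sigma_{ij}$. Because $\Sigma_{ii}+\Sigma_{jj}\leq 2|\bSigma|_\infty$ by definition of the entrywise supremum norm, Lemma~\ref{concnormal} yields, for each single pair $(i,j)$ and every $t>0$,
\[
\prob\bigl(|\overline\Sigma_{ij}-\Sigma_{ij}|\geq 2|\bSigma|_\infty(\sqrt{t/n}+t/n)\bigr)\leq 4e^{-t}.
\]

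Next, I would apply a union bound over the $r(r+1)/2$ distinct entries of the symmetric matrix $\overline\bSigma_n-\bSigma$. Choosing $t=\log(4r^2/\delta)$ makes the total failure probability at most $r(r+1)/2\cdot(\delta/r^2)\leq\delta$, as desired. On the complementary event,
\[
|\overline\bSigma_n-\bSigma|_\infty\leq 2|\bSigma|_\infty\bigl(\sqrt{t/n}+t/n\bigr).
\]

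To convert this into the clean form $3|\bSigma|_\infty\sqrt{\log(r/\delta)/n}\cdot 2$ stated in the corollary, I would use two elementary observations. First, for $\delta<1/2$ one has $t=2\log r+\log(4/\delta)\leq 3\log(r/\delta)$, so $\sqrt{t/n}\leq\sqrt{3\log(r/\delta)/n}$. Second, the hypothesis $n\geq 2\log(r/\delta)$ forces $t/n$ to be of smaller order than $\sqrt{t/n}$; concretely $t/n\leq\sqrt{3/2}\cdot\sqrt{t/n}$, so the linear term in $t/n$ just contributes an additional constant factor to the square-root term. Combining these two facts absorbs the $t/n$ summand into the $\sqrt{t/n}$ summand and yields the bound $\sqrt{t/n}+t/n\leq 3\sqrt{\log(r/\delta)/n}$.

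There is no real obstacle here: the only slightly fiddly step is chasing the numerical constants so that they land on the constant $3$ claimed in the statement. The substantive content of the bound comes entirely from Lemma~\ref{concnormal}, which in turn rests on the chi-squared deviation inequality of Laurent and Massart via the polarization identity $XY=\sigma_+^2Z^2-\sigma_-^2W^2$ used in its proof. Everything else here is a union bound plus scalar inequalities.
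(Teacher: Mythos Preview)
Your approach is essentially identical to the paper's: apply Lemma~\ref{concnormal} entrywise, take a union bound over the $r(r+1)/2$ distinct entries, and absorb the $t/n$ term into $\sqrt{t/n}$ using the hypothesis $n\geq 2\log(r/\delta)$. The only difference is bookkeeping on the constant: the paper chooses $t=\log(2r^2/\delta)\leq 2\log(r/\delta)$ (so $t/n\leq 1$ and $\sqrt{t/n}+t/n\leq 2\sqrt{t/n}\leq 2\sqrt{2}\sqrt{\log(r/\delta)/n}<3\sqrt{\log(r/\delta)/n}$), whereas your bound $t\leq 3\log(r/\delta)$ yields a constant closer to $4$ than to $3$.
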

\begin{proof}
Let $\overline\bSigma_{n}=\frac1n\sum_{i=1}^nX_iX_i^\top$ where $X_1,\ldots,X_n\sim\cN(0,\bSigma)$. Apply lemma \ref{concnormal} for each submatrix $\Sigma=\bSigma(\{k,l\}\times\{k,l\})$, substituting $\binom{X_i(k)}{X_i(l)}$ for $\binom{X_i}{Y_i}$, and take a union bound over pairs $k\leq l$ to get
  \[\prob\bigg(\frac{|\overline\bSigma_n-\bSigma|_\infty}{2|\bSigma|_\infty}\geq \sqrt{t/n}+t/n\bigg)\leq 2r(r+1)\exp(-t)\]
Let $t=\log(2r^2/\delta)\leq 2\log(r/\delta)$. Then $\sqrt{t/n}+t/n\leq2\sqrt{t/n}\leq3\sqrt{\frac1n\log(r/\delta)}$.
\end{proof}


\section*{Acknowledgements}
\markboth{Acknowledgements}{Acknowledgements}
P.R. is supported in part by grants NSF DMS-1712596, NSF DMS-TRIPODS- 1740751, DARPA W911NF-16-1-0551, ONR N00014-17-1-2147 and a grant from the MIT NEC Corporation.

\bibliographystyle{plain}
\bibliography{bibliography}
\end{document}